\documentclass[10pt,twocolumn,letterpaper]{article}

 \usepackage{cvpr}              
\usepackage[most]{tcolorbox} 
\usepackage{enumitem}
\usepackage{amsthm}
\usepackage{mathtools}
\usepackage{booktabs}
\usepackage{siunitx}
\usepackage{multirow}
\usepackage{graphicx} 
\usepackage{array}      
\usepackage{caption}    
\usepackage{subcaption}
\usepackage{etoc} 
\usepackage{titletoc}
\allowdisplaybreaks
\tcbset{
	Intuitions/.style={
		enhanced,
		sharp corners,
		colback=gray!5,
		colframe=black,
		fonttitle=\bfseries\color{white},
		coltitle=white,
		title filled,
		colbacktitle=black,
		boxrule=0.5pt,
		attach boxed title to top left={yshift=-2mm, xshift=2mm},
		top=1mm, bottom=1mm, left=1mm, right=1mm
	}
}
\usepackage{listings}
\usepackage{xcolor}
\usepackage{marvosym}
\lstdefinestyle{pythonstyle}{
	language=Python,
	basicstyle=\ttfamily\small,
	keywordstyle=\color{blue},
	stringstyle=\color{red},
	commentstyle=\color{gray},
	showstringspaces=false,
	breaklines=true,
}
\newtheorem{theorem}{Theorem}[section]      

\theoremstyle{definition} 

\theoremstyle{remark} 

\definecolor{cvprblue}{rgb}{0.21,0.49,0.74}
\usepackage[pagebackref,breaklinks,colorlinks,allcolors=cvprblue]{hyperref}

\def\paperID{Anonymous} 
\def\confName{CVPR}
\def\confYear{2026}

\title{Closed-Form Concept Erasure via Double Projections}

\author{
	Chi Zhang$^{1}$ \qquad
	Jingpu Cheng$^{1}$ \qquad
	Zhixian Wang$^{2}$ \qquad
	Ping Liu$^{3}$$^{\text{\small\Letter}}$\\
	$^{1}$National University of Singapore \quad
	$^{2}$Shanghai Jiao Tong University \quad
	$^{3}$University of Nevada, Reno\\
	{\tt\small czhang24@nus.edu.sg}
	\quad
	{\tt\small chengjingpu@u.nus.edu}
	\quad
	{\tt\small jd.wzx@sjtu.edu.cn}
	\quad
	{\tt\small pino.pingliu@gmail.com}
}

\begin{document}
\maketitle

\renewcommand{\thefootnote}{}
\footnotetext{{\small\Letter}\, Corresponding author. Accepted to the IEEE/CVF Conference on Computer Vision and Pattern Recognition (CVPR) 2026.}
\renewcommand{\thefootnote}{\arabic{footnote}}

\begin{abstract}
	While modern generative models such as diffusion-based architectures have enabled impressive creative capabilities, they also raise important safety and ethical risks. These concerns have led to growing interest in concept erasure, the process of removing unwanted concepts from model representations. Existing approaches often achieve strong erasure performance but rely on iterative optimization and may inadvertently distort unrelated concepts. In this work, we present a simple yet principled alternative: a linear transformation framework that achieves concept erasure analytically, without any training. Our method adapts a pretrained model through two sequential, closed-form steps: first, computing a proxy projection of the target concept, and second, applying a constrained transformation within the left null space of known concept directions. This design yields a deterministic and geometrically interpretable procedure for safe, efficient, and theory-grounded concept removal. Across a wide range of experiments, including object and style erasure on multiple Stable Diffusion variants and the flow-matching model (FLUX), our approach matches or surpasses the performance of state-of-the-art methods while preserving non-target concepts more faithfully. Requiring only a few seconds to apply, it offers a lightweight and drop-in tool for controlled model editing, advancing the goal of safer and more responsible generative models. Code is available \href{https://github.com/czhang024/CVPR_DouleProjection}{here}.
\end{abstract}

\section{Introduction}
The remarkable capabilities of modern generative models, including diffusion-based~\cite{ho2020denoising,song2020denoising}  and flow-based~\cite{dosovitskiy2020image,lipman2022flow,peebles2023scalable} methods, have revolutionized content creation. These systems can produce diverse, high-fidelity images and text from simple prompts, enabling a wide range of creative and practical applications~\cite{yu2022scaling,rombach2022high,ramesh2022hierarchical_arxiv2022}. Yet this power comes with risks: generative models may inadvertently reproduce copyrighted material, generate biased or harmful content, or reveal sensitive information~\cite{carlini2021extracting,carlini2022quantifying}. Such concerns have made concept erasure~\cite{gandikota2023erasing}, the selective removal of undesired concepts from model representations, an increasingly important direction for safe and responsible AI.


Existing approaches pursue this goal through a range of mechanisms, including cross-attention layer modifications~\cite{gandikota2023erasing,gandikota2024unified,lu2024mace}, model pruning strategies~\cite{yang2024pruningrobustconcepterasing_arxiv2024,chavhan2024conceptprune_iclr2025}, regularization-based editing~\cite{huang2024receler}, and adversarial-guided erasure~\cite{AdvUnlearn_nips2024,bui2025age_iclr2025}. In general, these methods seek to remove target concepts by altering specific model components or parameters such as attention mechanisms, feature representations, or network weights. 
Collectively, these approaches have proven highly effective at suppressing targeted concepts in complex generative models, demonstrating that such information can indeed be localized and selectively removed from internal representations~\cite{gandikota2023erasing}. This progress has also enabled a variety of beneficial applications~\cite{gandikota2024unified}, including removing unwanted objects or artistic styles, enforcing copyright protection, mitigating harmful content, and promoting fairness in generative outputs.

Yet, in doing so, existing methods may also unintentionally affect other, non-target concepts, degrading the model’s overall representational balance. For example, pruning neurons associated with a particular concept~\cite{chavhan2024conceptprune_iclr2025} can also remove neurons critical for other semantic attributes or generative behaviors, leading to noticeable drops in performance.
This raises a central practical question: how can we effectively erase specific concepts while preserving a model’s knowledge of non-target concepts?

To address this challenge, we introduce ``Concept Erasure with Double Projections'' (DP), a principled and efficient framework that explicitly minimizes interference with non-target representations. Instead of relying on iterative optimization or retraining, DP reformulates concept erasure as a pair of analytical projection steps with clear geometric interpretation. The first projection isolates the safe component of a target concept by aligning it with known non-target directions. The second applies a constrained transformation within the left nullspace of preserved representations, ensuring that removing the target concept minimally affects others. Importantly, both steps admit analytically closed-form solutions, yielding a \emph{deterministic, training-free} method that operates in seconds.

We evaluate the proposed method across multiple concept-erasure settings, including object and style erasure, using several variants of Stable Diffusion~\cite{rombach2022high} and the recent flow-matching model FLUX~\cite{lipman2022flow,batifol2025flux}. Across all these architectures, our approach achieves erasure performance comparable to or better than existing state-of-the-art techniques in terms of removing the targeted concepts. More importantly, both qualitative and quantitative results consistently demonstrate that our approach better preserves the remaining non-target concepts, maintaining the overall generative quality and diversity of model outputs. 


Overall, our study demonstrates that concept erasure can be formulated and solved efficiently within a \emph{principled geometric framework}. By decoupling the optimization into two analytically solvable steps, the double projection solution achieves both interpretability and practicality, removing unwanted concepts in seconds without retraining or iterative fine-tuning. In practice, such a projection design offers several key advantages:
\begin{enumerate}[label=(\arabic*)]
	\item \textbf{Closed-form formulation.} We reformulate concept erasure as an analytically solvable linear transformation problem, providing a one-shot solution with provable guarantees and eliminating any need for retraining. 
	\item \textbf{Geometric interpretability.} The proposed double-projection design offers a principled geometric perspective that explicitly characterizes how erasure and preservation interact within the representation space. 
	\item \textbf{Effective erasure and preservation.} Our method achieves state-of-the-art suppression of targeted concepts while minimizing interference with non-target semantics, preserving both visual quality and diversity. 
	\item \textbf{Cross-model generality.} The framework operates consistently across multiple diffusion and flow-matching architectures, demonstrating robustness and scalability for diverse generative backbones. 
\end{enumerate}

\section{Related Works}
\vspace{-0.1in}
\paragraph{Deep Generative Models and Personalization.}  
Deep generative models have become the foundation of modern image synthesis, with diffusion-based and flow-matching architectures leading recent advances \citep{rombach2022high, ramesh2022hierarchical_arxiv2022, ho2020denoising, lipman2022flow, liu2022flow, albergo2023building}.
Diffusion models \citep{ho2020denoising} generate images through iterative denoising from Gaussian noise, guided by learned score functions to produce highly realistic and semantically consistent outputs. Flow-matching methods \citep{lipman2022flow} later introduced deterministic mappings between noise and data distributions, improving sample efficiency by aligning trajectories in a continuous latent space \citep{lipman2022flow, liu2022flow}.
These advances have enabled the synthesis of high-quality, semantically faithful imagery, driving widespread adoption across creative, industrial, and scientific applications. Building on this progress, personalization techniques have been developed to adapt generative models for user-specific concepts from only a few examples. In diffusion models, approaches such as DreamBooth \citep{ruiz2023dreambooth_cvpr2023}, Textual Inversion \citep{gal2022image}, and parameter-efficient tuning \citep{kumari2023multi, shi2024instantbooth} enable subject-driven generation without retraining the full model. More recently, personalization has extended to flow-based architectures, with classifier-guided adaptation \citep{sun2024rectifid} and LoRA-based fine-tuning \citep{dalva2025lorashop} supporting flexible concept encoding and efficient customization.

\vspace{-0.15in}
\paragraph{Risks and Safety Concerns in Deep Generative Models.}
Despite their remarkable versatility, generative models introduce serious ethical and safety challenges. One major concern is copyright infringement: large-scale models trained on web data can memorize and reproduce copyrighted works nearly verbatim \citep{carlini2023extracting, somepalli2023diffusion}, leading to legal disputes with artists and creators \citep{andersen2023lawsuit}. Another is bias amplification—these models often internalize and reinforce stereotypes present in their training data \citep{luccioni2023stable, cho2023dall, bianchi2023easily}, perpetuating harmful associations related to gender, race, or occupation. Generative models are also prone to producing unsafe or explicit content, including violent or pornographic imagery \citep{schramowski2023safe, hunter2023ai}, and safety filters designed to prevent such outputs can often be bypassed \citep{rando2022red}. Furthermore, personalization techniques can exacerbate these issues by enabling malicious use cases such as nonconsensual deepfakes and imitation of artistic styles without consent \citep{shan2023glaze, salman2023raising}.  

\vspace{-0.15in}
\paragraph{Concept Erasure.}  
These multifaceted safety challenges have spurred extensive research on concept erasure techniques aimed at mitigating harmful generative behaviors \citep{gandikota2023erasing, gandikota2024unified, kumari2023ablating, kim2023towards, wang2024unified, chengmachine}. Concept erasure seeks to suppress a model’s ability to reproduce undesired objects, styles, or identities while maintaining generation quality for non-targeted concepts.
Representative approaches include fine-tuning methods~\citep{gandikota2023erasing,heng2023selective}, cross-attention editing~\citep{gandikota2024unified,lu2024mace,huang2024receler}, and attention re-steering~\citep{zhang2024forget}.
Other strategies involve regularization~\citep{zhao2024separable}, pruning~ \citep{chavhan2024conceptprune_iclr2025}, adversarial training~\citep{bui2025age_iclr2025}, Dumo~\citep{han2025dumo}, and trajectory-based techniques~\citep{carter2025trace}. Overall, these methods seek to alter the behavior of pretrained models through post-training modifications, particularly efficient approaches~\cite{hu2022lora,zhang2024parameter,zhang2025weight}. Recent efforts also explore interpretability-driven erasure using sparse autoencoders \citep{cywinski2025saeuron, kim2025concept}, training-free localized erasure via low-rank adaptation \citep{lee2025localized}, and neuron-level precision removal \citep{he2025single}. There is also growing interest in robustness and evaluation~\cite{liu2025erased,chin2023prompting4debugging,zhang2024generate,xie2025erasing}. In addition to these diffusion-based models, extensions on concept erasure have also been proposed for flow-matching models \citep{gao2025eraseanything_icml}, autoregressive transformers \citep{han2025vcesafeautoregressiveimage_arxiv2025}, and text-to-video generation \citep{ye2025t2vunlearningconcepterasingmethod_arxiv2025, xu2025videoeraserconcepterasuretexttovideo_emnlp2025}. Related ideas are explored in large language models through nullspace-based editing \citep{fang2025alphaedit}, which focuses on MLP layers, whereas our work targets attention and embedding layers for visual generative models.


\clearpage
\section{Concept Erasure: Problem Formulation and Geometric Insights}
\subsection{Problem Formulation}
Modern generative models, such as diffusion~\cite{ho2020denoising} and transformer-based architectures~\cite{dosovitskiy2020image}, implicitly encode a rich set of semantic concepts within their latent representations. Let $f_{\theta_0}: \mathbb{R}^n \rightarrow \mathbb{R}^p$ denote the pretrained model parameterized by $\theta_0$, which maps an internal latent code $z \in \mathbb{R}^n$ to an output feature $f_{\theta_0}(z) \in \mathbb{R}^p$. 
Then given a text prompt $c$, the model defines a conditional distribution
\[
p_{\theta_0}(x \mid c), \quad x \in \mathcal{X}, \; c \in \mathcal{C},
\]
representing the likelihood of generating an image $x$ conditioned on the prompt $c$.

Let $\mathcal{C}_{\text{target}} \subseteq \mathcal{C}$ denote the set of prompts corresponding to target concepts (e.g., specific objects, styles, or identities) to be erased, and let $\mathcal{X}_{\text{target}} \subseteq \mathcal{X}$ denote the associated undesired outputs.
The goal of concept erasure is to transform the model parameters from $\theta_0$ to $\theta$ such that the target concepts are effectively suppressed, while preserving the model’s ability to generate and represent a set of non-target concepts. Formally, we seek a transformation $\{\theta_0 \rightarrow \theta\}$ satisfying two complementary objectives: \emph{erasure} and \emph{preservation}.

\vspace{-0.15in}
\paragraph{Erasure objective.}
The primary requirement is that the modified model should not produce undesired content when conditioned on any target prompt. This can be expressed as
\begin{equation}
	\forall \, c \in \mathcal{C}_{\text{target}}:\quad 
	\mathrm{supp}\!\big(p_\theta(x \mid c)\big)\;\cap\;\mathcal{X}_{\text{target}} \;=\; \emptyset,
	\label{eq:hard-erasure}
\end{equation}
where $\mathrm{supp}(p_\theta(x \mid c))$ denotes the \emph{support} of the conditional distribution~\cite{folland1999real}, i.e., the set of all possible samples $x$ that the model can produce with nonzero probability under prompt $c$. In practice, this strict condition is relaxed to a probabilistic form:
\begin{equation}
	\forall \, c \in \mathcal{C}_{\text{target}}:\quad 
	\mathbb{P}_{x \sim p_\theta(\cdot \mid c)}\!\left[x \in \mathcal{X}_{\text{target}}\right] \;\le\; \delta,
	\label{eq:soft-erasure}
\end{equation}
where $\delta \ge 0$ sets a tolerance for the probability of undesired content generations.

\vspace{-0.15in}
\paragraph{Preservation objective.}
Equally important is preserving the model’s capabilities on non-target prompts. Ideally, the modified model should exhibit identical behavior to the original model on all non-target concepts. 
Let $\mathcal{C}_{\text{pres}} \subseteq \mathcal{C}$ denote the set of prompts to be preserved, and let $p_{\theta_0}$ denote the pretrained model’s distribution. 
A natural formulation is to require that the generated distributions remain close under some divergence measure $D(\cdot\Vert\cdot)$:
\begin{equation}
	\forall \, c \in \mathcal{C}_{\text{pres}}:\quad 
	D\ \!\big(p_\theta(\cdot \mid c)\, \Vert \, p_{\theta_0}(\cdot \mid c)\big) \;\le\; \varepsilon,
	\label{eq:preservation}
\end{equation}
where $\varepsilon \ge 0$ controls the tolerance for deviation. Preservation can be expressed through alignment in feature space using a functional $\phi(\cdot)$ (e.g., CLIP embeddings~\cite{radford2021learning}, perceptual features, or aesthetic scores):
\begin{equation}
	\forall c \in \mathcal{C}_{\text{pres}}:\;
	\|\mathbb{E}_{x \sim p_\theta(\cdot \mid c)}[\phi(x)]
	-\mathbb{E}_{x \sim p_{\theta_0}(\cdot \mid c)}[\phi(x)]\|_2
	\le \varepsilon_\phi.
	\label{eq:functional-preservation}
\end{equation}

Empirical measures of this divergence include Maximum Mean Discrepancy (MMD), Fréchet Inception Distance (FID)~\cite{heusel2017gans}, or performance-based metrics such as classification accuracy.

\subsection{Empirical Concept Erasure with UCE}
Directly optimizing the objectives in Eqs.~\eqref{eq:soft-erasure}, \eqref{eq:preservation}, and \eqref{eq:functional-preservation} within the full parameter space of a generative model is typically infeasible, owing to the distributed nature of concept representations, the high dimensionality of model parameters, and the nonlinear behavior of modern architectures. For practical deployment, empirical approaches operate within a restricted subspace, often targeting specific projection layers or attention matrices that encode concept-level information.

A representative example is Unified Concept Editing (UCE)~\cite{gandikota2024unified}, which applies modifications to selected model components (e.g., the Key and Value matrices in attention layers) and formulates the empirical objective as:
\begin{equation}
	\min_{W} 
	\Biggl(
	\sum_{c_i \in \mathcal{C}_{\text{target}}} 
	\| W c_i - W_0 c_i^* \|_2^2
	\;+\;
	\sum_{c_j \in \mathcal{C}_{\text{pres}}} 
	\| W c_j - W_0 c_j \|_2^2
	\Biggr),
	\label{eq:empirical-formulation}
\end{equation}
where $c_i^*$ denotes a proxy representation of the erased concept, typically chosen as a neutral anchor. The empirical objective in~\eqref{eq:empirical-formulation} consists of two complementary parts: the first term enforces the erasure of target concepts by aligning their transformed representations $W c_i$ with neutral proxy $W_0 c_i^*$, while the second term preserves non-target concepts by constraining the new mapping $W$ to remain close to $W_0$ on preserved prompts.

A notable advantage of UCE is that it admits a \emph{closed-form solution}, allowing direct computation of the optimal projection matrix without iterative training:
\begin{equation}
	\scriptsize
	W = 
	\Bigl(\!\!\sum_{c_i \in \mathcal{C}_{\text{target}}} v_i^* c_i^{\!\top}
	+\!\!\sum_{c_j \in \mathcal{C}_{\text{pres}}} W_0 c_j c_j^{\!\top}\Bigr)
	\Bigl(\!\!\sum_{c_i \in \mathcal{C}_{\text{target}}} c_i c_i^{\!\top}
	+\!\!\sum_{c_j \in \mathcal{C}_{\text{pres}}} c_j c_j^{\!\top}\Bigr)^{-1},
	\label{eq:uce-solution}
\end{equation}
where \(v_i^*=W_0 c_i^*\) is the desired target vector.  

This closed-form solution offers several practical advantages. 
First, it enables one-step computation of the updated projection matrix, avoiding iterative gradient-based optimization or retraining, which substantially reduces computational overhead. 
Moreover, by operating solely on the concept embeddings $\mathcal{C}_{\text{target}}$ and $\mathcal{C}_{\text{pres}}$, the method is entirely data-independent and does not require additional image sampling or backpropagation through the generative model. 

\subsection{Closed-Form Solution $\neq$ Good Solution}
While closed-form approaches such as UCE offer clear advantages in efficiency, they do not inherently guarantee the preservation of non-target concepts. Despite the inclusion of a preservation term in Eq.~\eqref{eq:empirical-formulation}, violations on preserved prompts can still occur—particularly when the target and preserved concepts are correlated or non-orthogonal in the latent space. The following geometric insights provide an intuitive understanding of why violations of preserved concepts may still occur.
\begin{tcolorbox}[
	title={Geometric Insights},
	colback=gray!5,
	colframe=gray!40!black,
	boxrule=0.4pt,
	arc=2pt,
	left=3pt,
	right=3pt,
	top=3pt,
	bottom=3pt
	]
	In least-squares regression, the best-fit line minimizes total error but does not necessarily pass through every data point. Concept erasure in Eq.~\eqref{eq:empirical-formulation} behaves similarly: each concept embedding is a point in a high-dimensional space, and the optimization only finds a transformation $W$ that minimizes the global loss. As such, target and preserved concepts may not lie on the fitted line, or even deviate significantly from the fitted solution, causing degradation and distortions in these concepts.
\end{tcolorbox}

The following theorem provides a more concrete analysis of this phenomenon.


\begin{theorem}[Perturbation of Preserved Concepts]
	\label{thm:perturbations}
	Assume there is only one target vector $c$ to be edited to $v^*$ and let $C_{\mathrm{pres}}$ denote the concatenated preservation matrix. Let $N=c c^\top+C_{\mathrm{pres}}C_{\mathrm{pres}}^\top$, and assume that for some preserve vector $p$, $\langle N^{-1/2}c, N^{-1/2}p\rangle \ge \lambda \langle N^{-1/2}c, N^{-1/2}c\rangle$ for some $\lambda >0$. Then we have 
	\begin{equation}
		\|\Delta W p\|_2 \ge \lambda \|\Delta W c\|_2.
	\end{equation}
	That is, the perturbation on the non-target vector $p$ is at least $\lambda$ times the perturbation on the target vector $c$.
\end{theorem}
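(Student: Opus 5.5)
We work directly with the closed-form solution \eqref{eq:uce-solution}, specialized to a single target. With $N = cc^\top + C_{\mathrm{pres}}C_{\mathrm{pres}}^\top$ (assumed invertible so that $N^{-1/2}$ makes sense), the UCE update reads
\[
W = \bigl(v^* c^\top + W_0 C_{\mathrm{pres}}C_{\mathrm{pres}}^\top\bigr)N^{-1}.
\]
Since $C_{\mathrm{pres}}C_{\mathrm{pres}}^\top = N - cc^\top$, this becomes
\[
W = v^* c^\top N^{-1} + W_0 - W_0 cc^\top N^{-1}.
\]
Hence the perturbation is rank one:
\[
\Delta W = W - W_0 = (v^* - W_0 c)\, c^\top N^{-1}.
\]
This algebraic reduction is the key step. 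Once we have it, the rest is a scalar comparison.

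Set $u = v^* - W_0 c$. Applying $\Delta W$ to any vector $x$ gives
\[
\Delta W x = u\,(c^\top N^{-1} x) = u\,\langle N^{-1/2}c, N^{-1/2}x\rangle,
\]
using that $N$ is symmetric positive definite, so $N^{-1} = N^{-1/2}N^{-1/2}$. Therefore
\begin{align*}
\|\Delta W p\|_2 &= \|u\|_2\,\bigl|\langle N^{-1/2}c, N^{-1/2}p\rangle\bigr|, \\
\|\Delta W c\|_2 &= \|u\|_2\,\|N^{-1/2}c\|_2^2.
\end{align*}
The hypothesis says $\langle N^{-1/2}c, N^{-1/2}p\rangle \ge \lambda \|N^{-1/2}c\|_2^2 \ge 0$, so the absolute value can be dropped. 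Multiplying both sides by $\|u\|_2$ gives $\|\Delta W p\|_2 \ge \lambda \|\Delta W c\|_2$. If $u=0$, both sides vanish and the inequality holds trivially.

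The only real obstacle is the first step: recognizing that the preservation terms in the UCE normal equations collapse against $N^{-1}$, which is what makes $\Delta W$ rank one along $c^\top N^{-1}$. One also has to justify invertibility of $N$. This is implicit in the statement's use of $N^{-1/2}$, and we would state it as a standing assumption, for instance that $C_{\mathrm{pres}}$ together with $c$ spans $\mathbb{R}^n$. Beyond that, the argument is a one-line Cauchy–Schwarz-free comparison of two scalars times the same vector $u$.
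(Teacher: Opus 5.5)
Your proof is correct and follows the same route as the paper. Both reduce the UCE closed form to the rank-one update $\Delta W = (v^* - W_0 c)\,c^\top N^{-1}$, note that $\Delta W c$ and $\Delta W p$ are multiples of the same vector $v^* - W_0 c$, and compare the scalar coefficients $c^\top N^{-1}p$ and $c^\top N^{-1}c$ using the hypothesis. Your version is slightly tidier: you derive the rank-one form, which the paper only asserts, and you multiply through instead of forming the ratio $\|\Delta p\|_2/\|\Delta c\|_2$, so the degenerate case $v^* = W_0 c$ needs no separate handling.
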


In practice, we also observe this phenomenon consistently across different models and concept sets. 
Non-target concepts experience noticeable degradation for both object and style erasure in Table~\ref{tab:sd14_objects} and \ref{tab:sd14_style}. These observations underscore a crucial limitation: achieving a mathematically optimal solution under a least-squares objective does not imply controlled erasure and preservation. Maintaining their performance instead requires a more deliberate geometric design.

\section{Concept Erasure with Double Projections}
The geometric limitations of existing closed-form approaches motivate a more principled formulation of concept erasure. To this end, we propose ``Concept Erasure with Double Projections'' (DP), which explicitly decouples erasure and preservation through two sequential projections. By disentangling subspace interactions, DP provides analytical guarantees for training-free updates while retaining the efficiency and interpretability of a closed-form solution.

\subsection{Formulation}
We aim to identify an updated transformation $W \in \mathbb{R}^{p\times n}$ that effectively removes the representations of specific target concepts while preserving those of non-target concepts. 
Formally, we write $W = W_0 + \Delta W$ and optimize directly over $\Delta W$:
\begin{equation} 
	\min_{W \in \mathbb{R}^{p\times n},\, c_i^* \in \mathcal{S}} \left( \| W c_i - W_0 c_i^* \|_2^2 + \| W C_{\text{pres}} - W_0 C_{\text{pres}} \|_F^2 \right), 
	\label{eq:double-proj-objective} 
\end{equation}
where $W_0 \!\in\! \mathbb{R}^{p\times n}$ is the pretrained parameter matrix (e.g., an attention Key or Value matrix),  
$c_i \!\in\! \mathbb{R}^n$ is the embedding of a target concept to be erased,  
$C_{\text{pres}} = [c_1, c_2, \dots, c_m] \!\in\! \mathbb{R}^{n\times m}$ collects the embeddings of preserved (non-target) concepts,  
and $\mathcal{S}$ defines the safe subspace within which the proxy vectors $c_i^*$ are constrained to lie.

In essence, this optimization problem~\eqref{eq:double-proj-objective} involves two sets of variables: the weight matrix $W$ and the proxy vector $c_i^*$. A common approach to solving such problems is through ``alternating optimization''~\cite{boyd2011distributed,lee2000algorithms,bezdek2003convergence}, which iteratively updates one variable while keeping the other fixed until convergence. Yet, these iterative procedures typically rely on gradient-based training and can be \emph{computationally expensive}. Instead, we introduce a novel double projection method that yields a \emph{closed-form, training-free} approximation to this optimization problem.

\subsection{Projection 1: Proxy Construction in the Safe Subspace}
We begin by computing a proxy vector $c_i^*$ that captures the component of the target concept $c_i$ lying within the safe subspace $\mathcal{S}$. 
Let $S \in \mathbb{R}^{n \times k}$ denote the matrix whose columns form a (possibly non-orthogonal) basis of $\mathcal{S}$. 
The proxy is then obtained through an orthogonal projection:
\begin{equation}
	c_i^* = \mathrm{proj}_{\mathcal{S}}(c_i) = S(S^\top S)^+ S^\top c_i.
	\label{eq:proxy-vector}
\end{equation}
where $(S^\top S)^+$ denotes the general Moore--Penrose pseudoinverse~\cite{moore1920reciprocal}, ensuring the projection remains valid even if the basis vectors are linearly dependent. 
This step extracts the \emph{safe component} of the target concept within the non-target subspace, effectively filtering out directions that could interfere with preserved concepts. 
In practice, one can construct a safe region by using multiple safe concepts, $\mathcal{S} = \mathrm{span}\{s_1, s_2, \dots, s_k\}$. 
Note that when $\mathcal{S}$ is defined using a single concept vector ($k=1$), we require Eq.~\eqref{eq:proxy-vector} to collapse to the UCE case~\cite{gandikota2024unified}.

\subsection{Projection 2: Constrained Optimization for $W$}
Given the proxy $c_i^*$ from Projection~1, we now optimize the transformation $\Delta W$ while guaranteeing that updates are \emph{orthogonal} to the space of preserved concepts. Let the preserved (non-target) concept embeddings be collected as 
\[
C_{\text{pres}} = [\, c_1^{\text{pres}},\, c_2^{\text{pres}},\, \dots,\, c_m^{\text{pres}} \,] \in \mathbb{R}^{n\times m},
\]
whose column space defines the subspace that must remain invariant during erasure. 
We parametrize the updated transformation as
\[
\vspace{-0.02in}
W \;=\; W_0 + \Delta W, 
\qquad \text{s.t.}\quad \Delta W\, C_{\text{pres}} = 0,
\vspace{-0.02in}
\]
so that any change lies in the left nullspace of $C_{\text{pres}}$ and therefore leaves the preserved concepts untouched. 

Substituting this into Eq.~\eqref{eq:double-proj-objective} reduces the problem to
\begin{equation}
	\min_{\Delta W}\; \big\|(W_0 + \Delta W)\,c_i - W_0 c_i^*\big\|_2^2
	\quad \text{s.t.}\ \Delta W\, C_{\text{pres}} = 0,
	\label{eq:constrained-optimization}
\end{equation}
a linearly constrained least-squares problem in $\Delta W$.

Let $U_2 \in \mathbb{R}^{n\times (n-r)}$ be an orthonormal basis for the left nullspace of $C_{\text{pres}}$ (\,$r=\mathrm{rank}(C_{\text{pres}})$\,), so that $U_2^\top C_{\text{pres}}=0$. Any feasible update can be written as $\Delta W = Z U_2^\top$ with an unknown parameter $Z \in \mathbb{R}^{p\times (n-r)}$. Define
\[
x \;=\; U_2^\top c_i \in \mathbb{R}^{n-r},
\quad
b \;=\; W_0(c_i^* - c_i) \in \mathbb{R}^{p}.
\]
Eq.~\eqref{eq:constrained-optimization} becomes a standard least-squares problem,
\[	
\vspace{-0.05in}
\min_{Z}\; \big\| Z x - b \big\|_2^2,
\]
whose minimum-norm solution (when $x\neq 0$) is
\begin{equation}
	Z^\star \;=\; b\, x^\top \big(x x^\top\big)^{+} \;=\; \frac{b\, x^\top}{\|x\|_2^2}.
	\label{eq:z-solution}
\end{equation}
The update is therefore admitting a \emph{closed-form solution}:
\begin{equation}
	\Delta W^\star \;=\; Z^\star U_2^\top \;=\; \frac{W_0(c_i^* - c_i)\, x^\top}{\|x\|_2^2}\, U_2^\top.
	\label{eq:final-w-solution}
\end{equation}

Note for multiple-concept erasure  $C_{\mathrm{tgt}}=[c_1,\ldots,c_T]$, we can solve \eqref{eq:z-solution} for concept matrix $X$ in a similar way.

\subsection{Discussions and Geometric Insights}
The first projection is \emph{optional}, and one could directly specify a proxy $c_i^*$ as in UCE~\cite{gandikota2024unified} for simplicity, effectively bypassing this projection. However, constructing a richer safe subspace $\mathcal{S}$ generally reduces the magnitude of the update, leading to smaller $\|\Delta W\|_F^2$ and thus less perturbations to the original model. In contrast, the second projection is \emph{essential}: constraining $\Delta W$ to the nullspace of $C_{\text{pres}}$ guarantees orthogonality, ensuring that model modifications minimally affect the preserved representations. This geometric intuition is made precise in the following theorem.

\begin{theorem}[Preservation of Non-Target Concepts]
	\label{thm:orth-pres}
	Let \(C_{\mathrm{pres}} \in \mathbb{R}^{n\times m}\) denote the matrix of non-target concept embeddings, and let \(W_0 \in \mathbb{R}^{p\times n}\) be the pretrained transformation. 
	If the update \(\Delta W \in \mathbb{R}^{p\times n}\) satisfies \(\Delta W\, C_{\mathrm{pres}} = 0\), then for \(W^\star \coloneqq W_0 + \Delta W\) it holds that \(W^\star v = W_0 v\) for all \(v \in \mathrm{col}(C_{\mathrm{pres}})\); that is, all non-target concept representations are exactly preserved.
\end{theorem}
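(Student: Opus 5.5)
The statement follows from linearity alone, so I would argue directly from the definition of the column space. First, take an arbitrary \(v \in \mathrm{col}(C_{\mathrm{pres}})\). By definition there is a coefficient vector \(\alpha \in \mathbb{R}^m\) with \(v = C_{\mathrm{pres}}\alpha\), so \(v = \sum_{j=1}^m \alpha_j c_j^{\mathrm{pres}}\). No rank assumption on \(C_{\mathrm{pres}}\) is needed: if the columns are linearly dependent, \(\alpha\) is simply not unique, and any choice works.

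Next, expand \(W^\star v\) using \(W^\star = W_0 + \Delta W\) and the associativity of matrix multiplication:
\[
W^\star v = W_0 v + \Delta W\, C_{\mathrm{pres}}\,\alpha = W_0 v + (\Delta W\, C_{\mathrm{pres}})\,\alpha = W_0 v + 0\cdot\alpha = W_0 v .
\]
This uses only the hypothesis \(\Delta W\, C_{\mathrm{pres}} = 0\). Specializing to \(\alpha = e_j\) gives \(W^\star c_j^{\mathrm{pres}} = W_0 c_j^{\mathrm{pres}}\) for each preserved embedding. That is the claimed exact preservation.

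To tie the result to the method, I would also check that the closed-form update in Eq.~\eqref{eq:final-w-solution} satisfies the hypothesis. Since \(\Delta W^\star = Z^\star U_2^\top\) and \(U_2^\top C_{\mathrm{pres}} = 0\) by the construction of \(U_2\), we get \(\Delta W^\star C_{\mathrm{pres}} = Z^\star (U_2^\top C_{\mathrm{pres}}) = 0\). The theorem therefore applies to the double-projection solution.

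There is no genuine obstacle here. The only point needing care is to state the conclusion for all of \(\mathrm{col}(C_{\mathrm{pres}})\), not just for the individual columns, and the linear-combination step handles that.
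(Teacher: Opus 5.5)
Your proof is correct and is essentially the paper's argument. The paper gives no standalone proof of this theorem; it recovers it as the $k=r$ case of Theorem~\ref{thm:preservation_bound}, whose Step~1 uses the same linearity fact ($\Delta W$ kills $U_k$, hence kills all of $\mathrm{span}(U_k)$) with the orthonormal basis $U_1$ of $\mathrm{col}(C_{\mathrm{pres}})$ in place of the columns of $C_{\mathrm{pres}}$. Your version works directly from the stated hypothesis $\Delta W\,C_{\mathrm{pres}}=0$ and separately checks that the closed-form update satisfies it, so it does not rely on the particular parameterization $\Delta W = Z U_{2,k}^\top$.
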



Most importantly, both projections in Eqs.~\eqref{eq:proxy-vector} and~\eqref{eq:final-w-solution} admit exact \emph{closed-form solutions}. Each step, from computing the proxy vector $c_i^*$ to updating the transformation $W$, can be derived analytically without any iterative optimization or gradient-based training. This makes the entire procedure fully \emph{deterministic and training-free}, combining computational efficiency with clear geometric interpretability. In practice, $c_i^*$ and $C_{\mathrm{pres}}$ are shared by all layers. Moreover, these closed-form updates enable DP to be performed within seconds, in contrast to optimization-based approaches that often require minutes or hours. 

Meanwhile, it is not necessary to include all available concepts as preservation targets. Studies from AGE~\cite{bui2025age_iclr2025} indicate that concept erasure exhibits a largely \emph{localized} effect: removing one concept mainly affects a small neighborhood of semantically related concepts, which can be identified through a concept graph. Hence, $C_{\mathrm{pres}}$ can be constructed from a compact, semantically relevant subset. Moreover, when $C_{\mathrm{pres}}$ includes many concepts, a low-rank truncation can be applied via its singular value decomposition,
$
C_{\mathrm{pres}} = U_1 \Sigma V^\top,
$
where $U_1 = [u_1, \ldots, u_r] \in \mathbb{R}^{n \times r}$ contains left singular vectors ordered by singular values 
$\sigma_1 \ge \cdots \ge \sigma_r > 0$. 
Retaining only the top-$k$ components,
$
U_{1,k} = [u_1, \ldots, u_k], 
$
captures the dominant subspace while discarding low-energy, redundant directions. 
The update rule in Eq.~\eqref{eq:constrained-optimization} can then be parameterized as 
$\Delta W = Z\,U_{2,k}^\top$, 
admitting a similar closed-form solution. The following theorems provide the lower bound for the erasing targets and the upper bound for non-target concepts.

\begin{table*}[thpb]
	\centering
	\caption{Results on SD~1.4 for all algorithms. Each block includes both original and post-update accuracies. 
		Left: \textbf{Target Class} shows erasure performance (\textbf{Erased Accuracy}~$\downarrow$). 
		Right: \textbf{Other Classes} reports the accuracy of preserved concepts (\textbf{Preservation Drop}~$\downarrow$). Lower values indicate stronger erasure and better preservation. \textit{Note:} UCE and DP are \textbf{deterministic methods} and thus no standard deviation is reported. $^{\dagger}$Detailed analysis of why UCE performs worse in these two cases are provided in the Appendix~D.}
	\label{tab:sd14_objects}
	\vspace{-0.12in}
	\resizebox{0.98\textwidth}{!}{
		\begin{tabular}{l|c|ccccc|c|ccccc}
			\toprule
			\multirow{2}{*}{\textbf{Object}} &
			\multicolumn{1}{c|}{\textbf{Target Class}} &
			\multicolumn{5}{c|}{\textbf{Erased Accuracy (\%)~$\downarrow$}} &
			\multicolumn{1}{c|}{\textbf{Other Classes}} &
			\multicolumn{5}{c}{\textbf{Preservation Drop (\%)~$\downarrow$}} \\
			\cmidrule(lr){2-2}\cmidrule(lr){3-7}\cmidrule(lr){8-8}\cmidrule(lr){9-13}
			& Original  & ESD & CP & AGE & UCE & DP & Original & ESD & CP & AGE & UCE & DP \\
			\midrule
			Cassette Player   & $78.0$ & $20.5_{\pm1.5}$ &  $4.3_{\pm0.3}$ & $24.0_{\pm1.0}$ & $12.0^{\dagger}$ & $\textbf{2.0}$ & $86.8$ & $22.6_{\pm3.1}$ & $31.7_{\pm3.5}$ &  $5.8_{\pm0.4}$ & $20.8$ & $\textbf{3.3}$ \\
			Chain Saw         & $80.0$ &  $\textbf{0.0}_{\pm0.0}$ &  $1.0_{\pm0.0}$ &  $1.0_{\pm0.0}$ &  $\textbf{0.0}$ & $\textbf{0.0}$ & $86.6$ & $22.4_{\pm1.7}$ & $34.1_{\pm3.2}$ &  $6.7_{\pm0.5}$ &  $0.6$ & $\textbf{0.3}$ \\
			Church            & $84.0$ &  $\textbf{4.0}_{\pm1.0}$ & $24.6_{\pm0.6}$ & $16.3_{\pm1.7}$ &  $\textbf{4.0}$ & $\textbf{4.0}$ & $86.1$ & $25.3_{\pm2.1}$ & $17.4_{\pm1.0}$ &  $7.2_{\pm0.4}$ & $15.1$ & $\textbf{6.1}$ \\
			Gas Pump          & $77.0$ &  $8.3_{\pm1.0}$ &  $4.0_{\pm0.7}$ &  $4.0_{\pm1.0}$ &  $6.0$ & $\textbf{2.0}$ & $86.9$ & $14.1_{\pm1.7}$ & $31.8_{\pm2.1}$ &  $3.9_{\pm0.3}$ &  $5.6$ & $\textbf{2.6}$ \\
			Tench             & $73.0$ &  $3.0_{\pm1.0}$ &  $\textbf{0.0}_{\pm0.0}$ & $12.0_{\pm2.0}$ &  $\textbf{0.0}$ & $\textbf{0.0}$ & $87.3$ & $12.1_{\pm1.1}$ & $34.3_{\pm3.1}$ &  $6.4_{\pm0.3}$ &  $9.9$ & $\textbf{4.6}$ \\
			Garbage Truck     & $78.0$ & $20.5_{\pm3.0}$ & $\textbf{0.0}_{\pm0.0}$ & $21.0_{\pm2.7}$ &  $\textbf{0.0}$ & $\textbf{0.0}$ & $86.8$ & $11.3_{\pm0.9}$ & $37.2_{\pm4.2}$ &  $4.6_{\pm0.3}$ &  $0.1$ & $\textbf{-1.2}$ \\
			English Springer  & $95.0$ &  $4.7_{\pm0.7}$ & $\textbf{0.0}_{\pm0.0}$ &  $\textbf{0.0}_{\pm0.0}$ &  $\textbf{0.0}$ & $\textbf{0.0}$ & $84.9$ & $20.7_{\pm1.2}$ & $30.9_{\pm3.3}$ &  $5.3_{\pm0.5}$ &  $0.0$ & $\textbf{-0.8}$ \\
			Golf Ball         & $99.0$ &  $6.3_{\pm1.0}$ & $21.3_{\pm1.3}$ &  $6.0_{\pm1.0}$ & $56.0^{\dagger}$ & $\textbf{0.0}$ & $84.4$ & $28.2_{\pm0.0}$ & $32.5_{\pm0.0}$ &  $5.6_{\pm0.0}$ &  $6.2$ & $\textbf{5.4}$ \\
			Parachute         & $95.0$ &  $4.0_{\pm0.3}$ &  $1.0_{\pm0.0}$ & $12.0_{\pm1.3}$ &  $\textbf{0.0}$ & $\textbf{0.0}$ & $84.9$ & $17.4_{\pm2.1}$ & $39.9_{\pm4.1}$ &  $2.6_{\pm0.3}$ &  $4.0$ & $\textbf{-0.8}$ \\
			French Horn       & $100.0$ &  $1.0_{\pm0.0}$ &  $1.0_{\pm0.0}$ &  $\textbf{0.0}_{\pm0.0}$ &  $\textbf{0.0}$ & $\textbf{0.0}$ & $84.3$ & $20.4_{\pm2.1}$ & $34.7_{\pm3.1}$ &  $7.9_{\pm0.6}$ &  $4.7$ & $\textbf{4.2}$ \\
			\midrule
			\textbf{Mean}     & $85.9$ &  $7.2$ &  $5.7$ &  $9.6$ &  $7.8$ & $\textbf{0.8}$ & $85.9$ & $19.5$ & $32.5$ &  $5.6$ &  $6.7$ & $\textbf{2.4}$ \\
			\bottomrule
		\end{tabular}}
	\vspace{-0.15in}
\end{table*}

\begin{theorem}[Preservation Bound for Truncated Cases]
	\label{thm:preservation_bound}
	For any preserve vector $p_i \in \mathcal C_{\mathrm{pres}}$, we have
	\begin{equation}
		\label{eq:preservation_bound}
		\|(W'-W_0)p_i\|_2\le  \|Z^*\|_2\sigma_{k+1}(C_{\mathrm{pres}}),
	\end{equation}
	where $\sigma_{k+1}$ denotes the $k+1$-th singular value of $C_{\mathrm{pres}}$.
\end{theorem}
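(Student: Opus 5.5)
The idea is to write the truncated update explicitly as $W'-W_0 = Z^*U_{2,k}^\top$. Here $U_{2,k}\in\mathbb{R}^{n\times(n-k)}$ is an orthonormal basis of the orthogonal complement of $\mathrm{span}\{u_1,\dots,u_k\}$; it consists of the left singular vectors $u_{k+1},\dots,u_r$ of $C_{\mathrm{pres}}$ together with an orthonormal basis of $\mathrm{col}(C_{\mathrm{pres}})^\perp$. For each preserved vector $p_i$ (the $i$-th column of $C_{\mathrm{pres}}$) we then have
\[
\|(W'-W_0)p_i\|_2=\|Z^*U_{2,k}^\top p_i\|_2\le \|Z^*\|_2\,\|U_{2,k}^\top p_i\|_2 ,
\]
using submultiplicativity of the spectral norm. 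This reduces the theorem to showing $\|U_{2,k}^\top p_i\|_2\le\sigma_{k+1}(C_{\mathrm{pres}})$.

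To bound the second factor, write $p_i=C_{\mathrm{pres}}e_i$ and use the SVD $C_{\mathrm{pres}}=\sum_{j=1}^r\sigma_j u_j v_j^\top$. Since the columns of $U_{2,k}$ are orthogonal to $u_1,\dots,u_k$ and the columns spanning $\mathrm{col}(C_{\mathrm{pres}})^\perp$ annihilate every $u_j$, we get
\[
U_{2,k}^\top C_{\mathrm{pres}}=\sum_{j=k+1}^r\sigma_j\,(U_{2,k}^\top u_j)\,v_j^\top .
\]
The vectors $U_{2,k}^\top u_j$ for $j>k$ are orthonormal, because $u_j$ lies in the column space of $U_{2,k}$ and $U_{2,k}$ has orthonormal columns. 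Hence this matrix has singular values $\sigma_{k+1},\dots,\sigma_r$, so its spectral norm is $\sigma_{k+1}$. (If $k\ge r$, it is zero and the bound is trivial.)

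Therefore
\[
\|U_{2,k}^\top p_i\|_2=\|U_{2,k}^\top C_{\mathrm{pres}}e_i\|_2\le\|U_{2,k}^\top C_{\mathrm{pres}}\|_2\,\|e_i\|_2=\sigma_{k+1}(C_{\mathrm{pres}}).
\]
Combining this with the first display gives \eqref{eq:preservation_bound}.

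The main obstacle is identifying $U_{2,k}$ precisely, since the paper only says the update is ``parameterized as $\Delta W=ZU_{2,k}^\top$''. I will state explicitly that $U_{2,k}$ is the orthonormal complement of $U_{1,k}$, so that $U_{2,k}U_{2,k}^\top=I-U_{1,k}U_{1,k}^\top$. Under this definition, $Z^*$ in the bound is the truncated minimum-norm solution analogous to Eq.~\eqref{eq:z-solution}, namely $Z^*=b\,x^\top/\|x\|_2^2$ with $x=U_{2,k}^\top c_i$. The bound itself does not depend on the specific form of $Z^*$, only on its spectral norm.

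As a cross-check, $\|U_{2,k}^\top C_{\mathrm{pres}}\|_2=\|(I-U_{1,k}U_{1,k}^\top)C_{\mathrm{pres}}\|_2$ is exactly the residual of the best rank-$k$ approximation in the Eckart--Young theorem, which equals $\sigma_{k+1}$. This confirms the key step.
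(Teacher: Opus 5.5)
Your proof is correct and follows essentially the same route as the paper. Both arguments factor $W'-W_0=Z^\star U_{2,k}^\top$, pull out $\|Z^\star\|_2$, and bound $\|U_{2,k}^\top p_i\|_2=\|(I-P_k)p_i\|_2$ by $\sigma_{k+1}(C_{\mathrm{pres}})$; the only difference is that the paper bounds this column residual directly as $\sum_{j>k}\sigma_j^2 v_{ij}^2\le\sigma_{k+1}^2\sum_{j>k}v_{ij}^2\le\sigma_{k+1}^2$, whereas you use the spectral norm of the whole residual matrix $U_{2,k}^\top C_{\mathrm{pres}}$ (the Eckart--Young tail), which gives the same estimate.
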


Note that when $k = r$ (i.e., without truncation), the right-hand side of \eqref{eq:preservation_bound} vanishes, degenerating to Theorem~\ref{thm:orth-pres}. Moreover, let $C_{\mathrm{tgt}}^*$ be the anchor concepts and $C_\perp^{(k)}:=U_{2,k}^\top C_{\mathrm{tgt}}$. Define $B:=W_0\big(C_{\mathrm{tgt}}^\star-C_{\mathrm{tgt}}\big)$ (See Appendix~A.1 for detailed explanations on notations), and we have the following erasure bound.

\begin{theorem}[Erasure Bound for Truncated Cases]
	\label{thm:erasure_bound}
	Let the thin SVD of $C_\perp^{(k)}$ be
	$C_\perp^{(k)}=U_{q_k}^{(k)}\Sigma_{q_k}^{(k)} V_{q_k}^{(k)\top}$ with rank
	$q_k\in\{0,\ldots,\min(n-k,T)\}$.
	We have
	\begin{equation}
		(W-W_0)C_{\mathrm{tgt}} = B\,V_{q_k}^{(k)} V_{q_k}^{(k)\top}.
	\end{equation}
	Moreover, assume that $\ker B$ and the row space of $C_\perp^{(k)}$ intersect trivially, i.e.,
	$\ker B \cap \mathrm{row}(C_\perp^{(k)}) = \{0\}$.
	Equivalently, $B V_{q_k}^{(k)}$ has full column rank and $\sigma_{\min}(B V_{q_k}^{(k)})>0$.
	Then, for each target column $c_i$, letting $y_i:=V_{q_k}^{(k)\top} e_i$,
	\begin{equation}
		\|(W-W_0)c_i\|_2 \;\ge\; \sigma_{\min}\big(BV_{q_k}^{(k)}\big)\,\|y_i\|_2.
	\end{equation}
\end{theorem}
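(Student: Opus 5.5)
We first fix the closed form of the truncated multi-target update, since everything else follows from it. With $U_{2,k}\in\mathbb{R}^{n\times(n-k)}$ an orthonormal basis of the orthogonal complement of $\mathrm{span}\{u_1,\dots,u_k\}$, the update is parametrized as $\Delta W = Z U_{2,k}^\top$. The multi-target least-squares problem $\min_Z \|Z C_\perp^{(k)} - B\|_F^2$, with $C_\perp^{(k)} = U_{2,k}^\top C_{\mathrm{tgt}}\in\mathbb{R}^{(n-k)\times T}$, then has minimum-norm solution
\[
Z^\star = B\,C_\perp^{(k)+},
\]
exactly as in Eq.~\eqref{eq:z-solution}, where $X^+$ generalizes $x^\top/\|x\|_2^2$. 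This gives
\[
(W-W_0)C_{\mathrm{tgt}} = Z^\star U_{2,k}^\top C_{\mathrm{tgt}} = B\,C_\perp^{(k)+}C_\perp^{(k)}.
\]
Now insert the thin SVD $C_\perp^{(k)} = U_{q_k}^{(k)}\Sigma_{q_k}^{(k)}V_{q_k}^{(k)\top}$, so that $C_\perp^{(k)+} = V_{q_k}^{(k)}(\Sigma_{q_k}^{(k)})^{-1}U_{q_k}^{(k)\top}$. Using $U_{q_k}^{(k)\top}U_{q_k}^{(k)} = I$, the product $C_\perp^{(k)+}C_\perp^{(k)}$ collapses to $V_{q_k}^{(k)}V_{q_k}^{(k)\top}$, the orthogonal projector onto $\mathrm{row}(C_\perp^{(k)})$. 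This yields the identity in the theorem. The degenerate case $q_k=0$ gives both sides equal to zero, and we treat it separately by convention.

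For the lower bound, we take the $i$-th column, i.e., multiply the identity on the right by $e_i$:
\[
(W-W_0)c_i = B V_{q_k}^{(k)} y_i, \qquad y_i = V_{q_k}^{(k)\top}e_i\in\mathbb{R}^{q_k}.
\]
If $BV_{q_k}^{(k)}\in\mathbb{R}^{p\times q_k}$ has full column rank, the standard inequality $\|Ay\|_2 \ge \sigma_{\min}(A)\|y\|_2$ for tall full-column-rank $A$ gives the bound directly. This inequality follows from $y^\top A^\top A y \ge \lambda_{\min}(A^\top A)\|y\|^2$.

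The remaining step is to justify the stated equivalence: $\ker B\cap\mathrm{row}(C_\perp^{(k)})=\{0\}$ holds if and only if $BV_{q_k}^{(k)}$ has full column rank. Since the columns of $V_{q_k}^{(k)}$ form an orthonormal basis of the row space, $BV_{q_k}^{(k)}y=0$ holds exactly when $V_{q_k}^{(k)}y\in\ker B\cap \mathrm{row}$. Moreover, $V_{q_k}^{(k)}y=0$ forces $y=0$.

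The main obstacle is not technical depth but bookkeeping. We must state precisely the truncated multi-target solution used in Appendix A.1, namely the pseudoinverse form of $Z^\star$ and the orientation of $U_{2,k}$. We must also check that the minimum-norm choice is what makes the projector appear exactly, with no extra residual term. Once that formula is fixed, the rest is linear algebra.
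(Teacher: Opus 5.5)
Your proof is correct and follows the paper's argument. Your $Z^\star = B\,C_\perp^{(k)+}$ is the same as the paper's (CF) expression $B\,(C_\perp^{(k)})^\top\big(C_\perp^{(k)}(C_\perp^{(k)})^\top\big)^{+}$; you then reduce to the projector $V_{q_k}^{(k)}V_{q_k}^{(k)\top}$ onto $\mathrm{row}(C_\perp^{(k)})$, take the $i$-th column, and apply $\|Ay\|_2\ge\sigma_{\min}(A)\|y\|_2$, exactly as the paper does. Your check that $\ker B\cap\mathrm{row}(C_\perp^{(k)})=\{0\}$ is equivalent to $BV_{q_k}^{(k)}$ having full column rank is a small addition the paper only asserts, and the minimum-norm concern you raise is moot, since every least-squares minimizer $Z$ yields the same product $Z\,C_\perp^{(k)}$.
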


\section{Experiments}
We now turn to the empirical evaluation of concept erasure and preservation, examining the proposed approach under various scenarios such as object and style removal across different Stable Diffusion variants and modern flow-matching models.

\subsection{Experimental Setup}
\paragraph{Backbones and Tasks} Our experiments are first conducted on Stable Diffusion v1.4 (SD1.4), the most widely used backbone in prior concept-erasure studies. To assess generality across architectures, we further evaluate our method on Stable Diffusion v1.5 (SD1.5) and the recent flow-matching generative model FLUX~\cite{lipman2022flow,batifol2025flux}. 
Following prior work~\cite{gandikota2023erasing,gandikota2024unified}, we consider two standard evaluation tracks: (i)~object-level erasure on ten ImageNet categories including \emph{cassette player}, \emph{chain saw}, \emph{church}, \emph{gas pump}, \emph{tench}, \emph{garbage truck}, \emph{English springer}, \emph{golf ball}, \emph{parachute}, and \emph{French horn} and (ii)~style-level erasure targeting five artistic concepts including \emph{Pablo Picasso}, \emph{Vincent van Gogh}, \emph{Rembrandt}, \emph{Andy Warhol}, and \emph{Caravaggio}. For object-level erasure, we report the Top-1 classification accuracy of a pretrained ResNet-50~\cite{he2016deep} on generated images. 
For style-level erasure, we measure the CLIP~\cite{radford2021learning} text–image similarity between generated samples and the corresponding style prompts as ~\cite{gandikota2023erasing,gandikota2024unified,bui2025age_iclr2025}.

\vspace{-0.2in}
\paragraph{Erasure Methods}
We benchmark representative concept-erasure methods that collectively span projection-, fine-tuning-, adversarial-, and pruning-based paradigms. 
Specifically, we compare against Unified Concept Editing (UCE)~\cite{gandikota2024unified}, Erased Stable Diffusion (ESD)~\cite{gandikota2023erasing},  ConceptPrune (CP)~\cite{chavhan2024conceptprune_iclr2025} and AGE~\cite{bui2025age_iclr2025}. These baselines cover a diverse methodological spectrum, enabling a comprehensive assessment of DP’s effectiveness and efficiency relative to existing approaches. For each method, ten image variants are generated per prompt.

\subsection{Object Erasure with Stable Diffusion}
We first focus on object-level concept erasure using Stable Diffusion v1.4 (SD 1.4), a canonical benchmark backbone for prior erasure studies~\cite{gandikota2023erasing,gandikota2024unified,bui2025age_iclr2025}. We follow these prior works in selecting the same ten ImageNet object categories to ensure comparability with established erasure benchmarks. However, our evaluation protocol adopts a \textbf{stricter and more realistic criterion} than previous studies. Specifically, we unify visually and semantically similar concepts (e.g., treating ``cassette player'' and ``tape player'' as equivalent categories) to mitigate the category ambiguity in diffusion outputs. Furthermore, unlike prior evaluations~\cite{bui2025age_iclr2025} that relied on Top-5 accuracy, we report Top-1 accuracy throughout.

Table~\ref{tab:sd14_objects} reports the performance of all algorithms. In terms of \textbf{concept erasure}, several existing methods achieve strong suppression of the target object, confirming that diffusion backbones are generally amenable to concept-level editing.
Methods like UCE, CP and DP, for instance, demonstrate effective removal on easily separable categories such as ``Chain Saw'' and ``English Springer'', where the erased accuracy drops close to zero. These results indicate that when the concept subspace is well localized, single-projection or pruning-based updates can adequately diminish target activations.

\begin{table*}[h]
	\centering
	\caption{Results on SD~1.4 for \textbf{artistic style erasure}. Each block includes both original and post-update accuracies. \textit{Note:} each artist is given a set of its own labels to compute the CLIP score. UCE and DP are \textbf{deterministic methods} and no standard deviation is reported.}
	\label{tab:sd14_style}
	\vspace{-0.15in}
	\resizebox{0.98\textwidth}{!}{
		\begin{tabular}{l|c|ccccc|c|ccccc}
			\toprule
			\multirow{2}{*}{\textbf{Style}} &
			\multicolumn{1}{c|}{\textbf{Target Class}} &
			\multicolumn{5}{c|}{\textbf{Erased Accuracy (\%)~$\downarrow$}} &
			\multicolumn{1}{c|}{\textbf{Other Classes}} &
			\multicolumn{5}{c}{\textbf{Preservation Drop (\%)~$\downarrow$}} \\
			\cmidrule(lr){2-2}\cmidrule(lr){3-7}\cmidrule(lr){8-8}\cmidrule(lr){9-13}
			& Original & ESD & CP & AGE & UCE & DP & Original & ESD & CP & AGE & UCE & DP \\
			\midrule
			Andy Warhol        & $86.0$ & $14.7_{\pm1.3}$ & $21.5_{\pm2.0}$ & $31.5_{\pm3.0}$ & $15.0$ & $\textbf{12.5}$ & $92.1$ &  $4.9_{\pm2.1}$ & $10.4_{\pm1.4}$ & $5.2_{\pm0.7}$ & $2.1$ & $\textbf{2.0}$ \\
			Caravaggio         & $82.0$ & $23.3_{\pm1.3}$ & $20.3_{\pm1.3}$ & $\textbf{7.5}_{\pm1.0}$ & $16.0$ & $11.5$ & $93.6$ &  $8.8_{\pm2.1}$ & $27.5_{\pm3.2}$ & $11.4_{\pm1.1}$ & $0.7$ & $\textbf{0.4}$ \\
			Pablo Picasso      & $81.5$ & $40.5_{\pm2.0}$ & $\textbf{24.5}_{\pm2.5}$ & $26.0_{\pm1.0}$ & $30.0$ & $26.0$ & $91.4$ & $14.7_{\pm1.2}$ & $13.4_{\pm2.1}$ & $4.3_{\pm0.3}$ & $2.9$ & $\textbf{2.6}$ \\
			Rembrandt          & $85.0$ & $18.0_{\pm2.0}$ & $15.0_{\pm1.0}$ &  $3.5_{\pm0.5}$ &  $4.5$ & $\textbf{3.0}$ & $84.4$ &  $8.8_{\pm0.9}$ & $23.3_{\pm2.1}$ & $6.9_{\pm0.7}$ & $-1.0$ & $\textbf{-1.4}$ \\
			Van Gogh           & $63.0$ &  $8.5_{\pm0.5}$ & $14.7_{\pm1.7}$ & $17.7_{\pm0.3}$ &  $7.0$ & $\textbf{5.5}$ & $89.9$ &  $6.7_{\pm0.6}$ &  $8.9_{\pm1.2}$ & $5.8_{\pm0.4}$ & $0.8$ & $\textbf{-1.3}$ \\
			\midrule
			\textbf{Mean}      & $79.5$ & $21.0$ & $19.2$ & $17.2$ & $14.5$ & $\textbf{11.7}$ & $90.3$ & $8.8$ & $16.7$ & $6.7$ & $1.1$ & $\textbf{0.5}$ \\
			\bottomrule
		\end{tabular}}
	\vspace{-0.05in}
\end{table*}

\begin{table*}[t]
	\centering
	\caption{Results on the FLUX model for object erasure. 
		Rows 1--3 report target erasure (\textbf{Erased Accuracy}~$\downarrow$). 
		Rows 4--6 report preservation fidelity (\textbf{Preservation Drop}~$\downarrow$). 
		Visualization of sample generated images is available in Appendix~\ref{sec:vis_flux}.}
	\vspace{-0.1in}
	\label{tab:flux_obj}
	\resizebox{0.98\linewidth}{!}{
		\begin{tabular}{lccccccccccc}
			\toprule
			\textbf{Metric} & Cassette Player & Chain Saw & Church & Gas Pump & Tench & Garbage Truck & English Springer & Golf Ball & Parachute & French Horn & \textbf{Mean} \\
			\midrule
			\multicolumn{12}{c}{\textbf{Target Class (Erased Accuracy, \% $\downarrow$)}} \\
			Original & 39.0 & 100.0 & 99.0 & 100.0 & 89.0 & 98.0 & 82.0 & 100.0 & 99.0 & 100.0 & 90.6 \\
			UCE & \textbf{0.0} & \textbf{0.0} & 63.0 & 73.0 & 6.0 & 21.0 & \textbf{0.0} & 76.0 & \textbf{0.0} & \textbf{0.0} & 23.9 \\
			DP & \textbf{0.0} & \textbf{0.0} & \textbf{12.0} & \textbf{0.0} & \textbf{0.0} & \textbf{0.0} & \textbf{0.0} & \textbf{1.0} & \textbf{0.0} & \textbf{0.0} & \textbf{1.0} \\
			\midrule
			\multicolumn{12}{c}{\textbf{Other Classes (Preservation Drop, \% $\downarrow$)}} \\
			Original & 96.3 & 89.6 & 89.7 & 89.6 & 90.8 & 89.8 & 91.6 & 89.6 & 89.7 & 89.6 & 90.6 \\
			UCE & 2.0 & 2.7 & 2.1 & 2.0 & 1.5 & 2.2 & 2.3 & 2.3 & 1.9 & 3.0 & 2.2 \\
			DP & \textbf{0.5} & \textbf{0.9} & \textbf{0.3} & \textbf{1.4} & \textbf{-0.1} & \textbf{0.4} & \textbf{1.2} &\textbf{ -0.1} & \textbf{1.0} & \textbf{0.3} & \textbf{0.6} \\
			\bottomrule
	\end{tabular}}
	\vspace{-0.15in}
\end{table*}

In terms of \textbf{preservation}, the proposed DP algorithm consistently achieves the smallest degradation across all objects, demonstrating a clear advantage in maintaining non-target representations.
While competing approaches often introduce secondary distortions, such as performance drops exceeding 20\% for ESD and CP, DP preserves nearly unchanged accuracy on the remaining nine categories, typically within only a few percentage points. This stability stems from its double-projection mechanism, particularly the nullspace projection, which explicitly constrains updates to the left nullspace of preserved representations.
Consequently, the erasure operation remains geometrically orthogonal to the non-target embeddings, ensuring that both the visual quality and semantic fidelity of unaffected generations are largely retained.

\subsection{Why is perfect preservation not observed?}
Beyond the observed performance improvements, it is also important to rethink why perfect preservation is not achieved in this experiment. Ideally, non-target concepts should remain entirely unaffected under DP, since the update $\Delta W$ is explicitly designed to be orthogonal to their embeddings, as established in Theorem~\ref{thm:orth-pres}. In practice, however, perfect preservation is not always observed.
This minor deviation arises from the presence of positional embeddings in diffusion models: although DP enforces $\Delta W c_j = 0$ to preserve non-target content embeddings, the model operates on representations of the form $z_j = c_j + q_j$, where $q_j$ denotes the positional embedding.
This additive coupling, which is also present in other closed-form methods such as UCE, introduces small but consistent deviations from perfect preservation, as confirmed empirically (see Appendix~C for details). Moreover, this issue is further compounded by the self-attention mechanism in the encoder, which introduces additional interactions across token representations.

In the following FLUX example, we demonstrate that this fluctuation can be mitigated by performing concept erasure directly on the embedding layers of our encoders.


\subsection{Artistic Styles Erasure with Stable Diffusion}
We next evaluate the proposed DP algorithm on the task of artistic style erasure, using Stable Diffusion 1.4 as the base model. Following prior studies~\cite{gandikota2024unified}, we focus on five representative artistic styles that exhibit diverse visual characteristics and degrees of abstraction. Performance is evaluated using the CLIP text–image similarity between generated images and their corresponding style prompts.

As shown in Table~\ref{tab:sd14_style}, existing methods already demonstrate competitive performance across several artistic styles. For instance, CP achieves particularly strong removal on ``Picasso'', while AGE performs slightly better on ``Caravaggio'', indicating that localized or style-specific optimization can yield strong suppression. Nevertheless, across all styles, DP achieves comparable or better erasure quality on all five artistic styles. The main advantage of DP lies in its strong ability to preserve non-target concepts. While other methods often degrade unrelated styles due to overlapping feature directions, DP ensures that style-independent components remain relatively intact. As a result, the model retains its ability to accurately reproduce unaffected artistic styles with minimal performance drop, typically within only a few percentage points.

\begin{figure*}[t]
	\centering
	\includegraphics[width=\linewidth]{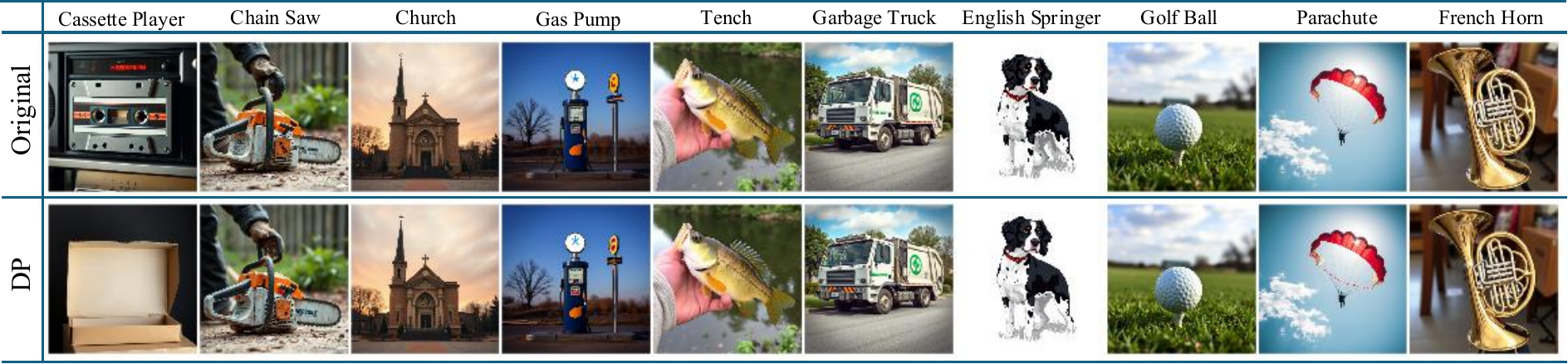}
	\vspace{-0.25in}
	\caption{Visualization of concept erasure on “Cassette player”. Results indicated that the target concept (column 1) is effectively suppressed, while the remaining nine categories (columns 2--10) show minimal impact.}
		\label{fig:fluxcassetteplayer}
	\vspace{-0.1in}
\end{figure*}

\vspace{-0.05in}
\subsection{Switching to Flow Matching}
\vspace{-0.05in}
Recent advances~\cite{lipman2022flow} demonstrate that flow matching offers an equally powerful and more theoretically grounded alternative. To test the generality of our erasure approach, we further evaluate DP on a recent flow-matching model, FLUX~\cite{batifol2025flux}. Note that ESD operates on predicted noise, whereas flow-matching models predict vector fields, making ESD incompatible with these experiments.There is also no direct support to utilize pruning or adversarial methods in flow matching. Consequently, we exclude these approaches from our comparison. Closed-form erasure methods like UCE and DP, by contrast, exhibit \emph{broader applicability} because they directly operate on linear mappings rather than model-specific generative dynamics. Note that in the FLUX model, we apply these closed-form updates to the \emph{embedding layers} rather than attention blocks. This formulation also naturally eliminates interference from positional embeddings, enabling a cleaner concept erasure process.

\begin{figure}[b]
	\centering
	\vspace{-0.2in}
	\includegraphics[width=0.8\linewidth]{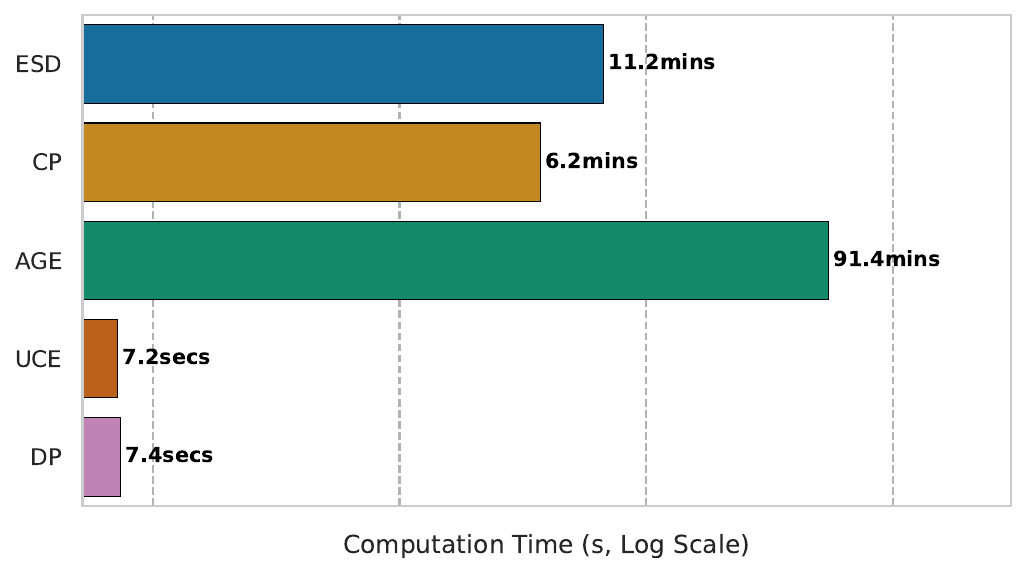}
	\vspace{-0.15in}
	\caption{Computation time comparison across erasure methods (log scale). Results indicate that closed-form approaches such as UCE and DP perform concept erasure within only a few seconds. Experiments are conducted for SD 1.4 on Nvidia 3090.}
	\vspace{-0.2in}
	\label{fig:timecomparison}
\end{figure}

As shown in Table~\ref{tab:flux_obj}, both closed-form methods, UCE and DP, achieve effective object erasure within the flow-matching framework. UCE successfully suppresses most target concepts, demonstrating its adaptability beyond diffusion models; however, its residual accuracies on complex categories such as ``Church'' and ``Gas Pump'' suggest that direct linear projections may not fully capture the flow field’s geometric structure. In contrast, DP consistently attains near-zero residual accuracies across all ten objects, confirming its ability to generalize across generative paradigms. Notably, DP also yields markedly smaller preservation drops, averaging only $0.6\%$ compared to UCE’s $23.9\%$, indicating that the nullspace constraint effectively isolates target directions even in a flow-based representation space. Specifically, Figure~\ref{fig:fluxcassetteplayer} presents the generated images from the original FLUX model and our proposed DP approach for the concept “Cassette Player”. Results indicate that the erased concept is effectively suppressed, while the remaining nine categories are largely preserved.
These results highlight that DP maintains its theoretical advantages, while also providing robust, architecture-agnostic concept erasure with minimal interference to non-target concepts on the FLUX model.

\subsection{Time Consumption}
One key advantage of closed-form erasure methods lies in their exceptional \emph{computational efficiency}. Iterative approaches, such as ESD, CP and AGE, require repeated optimization steps to update noise parameters or perform pruning, leading to substantial time costs on the order of several minutes or even hours, as shown in Figure~\ref{fig:timecomparison}. In contrast, closed-form formulations like UCE and DP complete the erasure process almost instantaneously, requiring only a few seconds. This is due to their underlying training-free mechanism.

\subsection{Additional Experiments}
Due to space constraints, we present additional experiments in the Appendix. (1) We provide quantitative measures including LPIPS~\cite{zhang2018lpips}, PSNR~\cite{bovik2000handbook}, SSIM~\cite{wang2004image}, and FID~\cite{heusel2017gans}, in Appendix~E. (2) Moreover, we conduct experiments on alternative model variants, such as Stable Diffusion v1.5 in Appendix~F. (3) We also present complete visualizations of generated images for both Stable Diffusion and FLUX in Appendix~G and Appendix~H, respectively. (4) Additionally, ablation studies are provided in Appendix~I. (5) Generalization beyond $C_{\text{pres}}$ is reported in Appendix~J.

\section{Conclusion}
In this work, we introduced DP, a closed-form, training-free framework for principled concept erasure in generative models. By formulating the task as a pair of sequential projections, first extracting the safe component of a concept, and then constraining updates within the left nullspace of preserved representations, DP offers a deterministic solution with clear geometric interpretability and analytical guarantees. Comprehensive experiments on both Stable Diffusion and the flow-matching model demonstrate that DP achieves erasure quality comparable to or exceeding existing baselines, while consistently minimizing preservation loss. This broad applicability, coupled with a runtime of only a few seconds, highlights the practicality of DP as a drop-in tool for safe and controllable concept erasure.

\clearpage
\section*{Acknowledgements}
This research is partially supported by the National Research Foundation, Singapore, under the NRF fellowship (project No.NRF-NRFF13-2021-0005).

{
	\small
	\bibliographystyle{ieeenat_fullname}
	\bibliography{main}
}

\appendix
\onecolumn

\section*{Appendix Catalogue}
\addcontentsline{toc}{section}{Appendix Catalogue}
\startcontents[appendices]
\printcontents[appendices]{}{1}{}

\clearpage

\section{Proof for Theorems}
\subsection{Additional Notations}
\label{sec:notations}
\paragraph{Notations}
Let $W_0\in\mathbb{R}^{p\times n}$ be the pretrained linear map, and $W=W_0+\Delta W$ the updated map.
Let $C_{\mathrm{tgt}}=[c_1,\ldots,c_T]\in\mathbb{R}^{n\times T}$ be target (to erase) embeddings, and
$C_{\mathrm{pres}}\in\mathbb{R}^{n\times m}$ the preserved set with ${\rm rank}(C_{\mathrm{pres}})=r$.
Let $C_{\mathrm{pres}}=U_1\Sigma V^\top$ be a thin SVD with left singular vectors
$U_1=[u_1,\ldots,u_r]\in\mathbb{R}^{n\times r}$ ordered by singular values
$\sigma_1\ge\cdots\ge\sigma_r>0$.
Fix $k\in\{0,1,\ldots,r\}$ and define
\[
U_k:=[u_1,\ldots,u_k]\in\mathbb{R}^{n\times k},\quad
U_{\mathrm{tail}}:=[u_{k+1},\ldots,u_r]\in\mathbb{R}^{n\times (r-k)}.
\]
Let $U_{\mathrm{out}}\in\mathbb{R}^{n\times (n-r)}$ be an orthonormal basis of the left nullspace of $C_{\mathrm{pres}}$.
Set the $(n-k)$-dimensional orthogonal complement of $\mathrm{span}(U_k)$ as
\[
U_{2,k}:=\big[\,U_{\mathrm{tail}}\ \ \ U_{\mathrm{out}}\,\big]\in\mathbb{R}^{n\times (n-k)},\qquad
P_k:=U_kU_k^\top,\quad I-P_k=U_{2,k}U_{2,k}^\top.
\]
We parameterize the update by
\[
\ \Delta W \;=\; Z\,U_{2,k}^\top,\qquad Z\in\mathbb{R}^{p\times (n-k)}\  \tag{P}
\]
which enforces \(\Delta W\,U_k=0\) and thus \emph{exactly preserves} the top-$k$ preserve directions.

Let the \emph{safe} proxy be $c_i^\star=\Pi_S c_i:=S(S^\top S)^+S^\top c_i$.
Define
\[
C_\perp^{(k)}:=U_{2,k}^\top C_{\mathrm{tgt}}\in\mathbb{R}^{(n-k)\times T},\qquad
B:=W_0\big(C_{\mathrm{tgt}}^\star-C_{\mathrm{tgt}}\big)\in\mathbb{R}^{p\times T}.
\]
The second projection reduces to a matrix least-squares problem
\[
\min_Z \ \|Z\,C_\perp^{(k)} - B\|_F^2,
\]
whose minimum-norm closed form is
\[
\ Z^\star \;=\; B\,\big(C_\perp^{(k)}\big)^\top\Big(C_\perp^{(k)}\big(C_\perp^{(k)}\big)^\top\Big)^{+},\qquad
\Delta W^\star=Z^\star U_{2,k}^\top.\ \tag{CF}
\]

\subsection{Proof for Theorem~\ref{thm:perturbations}}

\begin{proof}
	From the closed-form solution of UCE, the weight update is given by
	\[
	\Delta W := W_{\mathrm{uce}} - W_0 = (v^* - W_0 c)\, c^\top N^{-1},
	\]
	where \( N = c c^\top + C_{\mathrm{pres}} C_{\mathrm{pres}}^\top \).
	This expression shows that $\Delta W$ is a rank-one update: it modifies the weights in the direction of the residual $(v^* - W_0 c)$, scaled by a transformed version of the target vector $c$ through $N^{-1}$.
	
	To analyze how $\Delta W$ affects different representations, consider its action on the target vector $c$ and a preserve vector $p$:
	\[
	\begin{aligned}
		\Delta c &= \Delta W c = (v^* - W_0 c)\,(c^\top N^{-1} c), \\
		\Delta p &= \Delta W p = (v^* - W_0 c)\,(c^\top N^{-1} p).
	\end{aligned}
	\]
	Both perturbations are proportional to the same direction $(v^* - W_0 c)$, but differ in magnitude depending on how $p$ aligns with $c$ in the metric defined by $N^{-1}$.

	The coefficients $c^\top N^{-1} p$ and $c^\top N^{-1} c$ can be rewritten as inner products in a transformed space:
	\[
	c^\top N^{-1} p = \langle N^{-1/2} c, N^{-1/2} p \rangle, \qquad
	c^\top N^{-1} c = \langle N^{-1/2} c, N^{-1/2} c \rangle.
	\]
	This formulation highlights that the relative effect of $\Delta W$ on $p$ depends on the correlation between $N^{-1/2}c$ and $N^{-1/2}p$.  
	If $p$ is well aligned with $c$ under this transformation, it will inevitably experience a nontrivial perturbation when $c$ is edited.

	Since both $\Delta p$ and $\Delta c$ are parallel to $(v^* - W_0 c)$, their Euclidean norms differ only by the magnitude of the scalar coefficients:
	\[
	\|\Delta p\|_2 = |c^\top N^{-1} p|\,\|v^* - W_0 c\|_2, 
	\quad
	\|\Delta c\|_2 = |c^\top N^{-1} c|\,\|v^* - W_0 c\|_2.
	\]
	Taking their ratio gives
	\[
	\frac{\|\Delta p\|_2}{\|\Delta c\|_2}
	= \frac{|c^\top N^{-1} p|}{|c^\top N^{-1} c|}
	= \frac{|\langle N^{-1/2} c, N^{-1/2} p \rangle|}
	{|\langle N^{-1/2} c, N^{-1/2} c \rangle|}.
	\]

	By the theorem’s assumption, there exists a constant $\lambda > 0$ such that
	\[
	\langle N^{-1/2} c, N^{-1/2} p \rangle 
	\ge \lambda \, \langle N^{-1/2} c, N^{-1/2} c \rangle.
	\]
	Substituting this condition into the previous ratio yields
	\[
	\frac{\|\Delta p\|_2}{\|\Delta c\|_2} \ge \lambda,
	\]
	or equivalently,
	\[
	\|\Delta W p\|_2 = \|\Delta p\|_2 
	\ge \lambda \|\Delta c\|_2 
	= \lambda \|\Delta W c\|_2.
	\]
\end{proof}

\paragraph{Remark:}
This theorem indicates that if there exists a preserved vector $p$ that has a large projection on the target vector $c$ in the $N^{-1/2}$-weighted inner product space, then the perturbation on the preserved vector $p$ may also be comparable to that on the target vector $c$, leading to potential performance degradation on the corresponding preserved concept.

\subsection{Proof for Theorem~\ref{thm:preservation_bound}}
\begin{proof}
	\textbf{Step 1: Exact preservation on the top-$k$ subspace.}
	By construction $U_{2,k}^\top U_k=0$, hence
	\[
	\Delta W\,U_k \;=\; Z\,U_{2,k}^\top U_k \;=\; 0.
	\]
	Therefore for any $p_\parallel\in \mathrm{span}(U_k)$,
	\(
	W\,p_\parallel=(W_0+\Delta W)\,p_\parallel = W_0\,p_\parallel
	\),
	i.e., all top-$k$ principal directions are preserved exactly.
	
	\smallskip
	\textbf{Step 2: The update only acts on the $(I-P_k)$-component.}
	For a general $p\in\mathbb{R}^n$, decompose $p=p_\parallel+p_\perp$ with
	$p_\parallel=P_k p$ and $p_\perp=(I-P_k)p$. Since $\Delta W\,P_k=0$,
	\[
	(W-W_0)p \;=\; \Delta W\,p \;=\; \Delta W\,p_\perp
	\;=\; Z^\star U_{2,k}^\top p_\perp
	\;=\; Z^\star U_{2,k}^\top p.
	\]
	Taking norms and using $\|U_{2,k}^\top p\|_2=\|p_\perp\|_2$ gives
	\[
	\|(W-W_0)p\|_2 \;\le\; \|Z^\star\|_2 \,\|p_\perp\|_2.
	\]
	
	\smallskip
	\textbf{Step 3: Tail bound for preserved columns.}
	Let $p=p_i$ be a column of $C_{\mathrm{pres}}$.
	With the thin SVD $C_{\mathrm{pres}}=U_1\Sigma V^\top$,
	\[
	p_i \;=\; C_{\mathrm{pres}} e_i \;=\; U_1\Sigma V^\top e_i
	\;=\; \sum_{j=1}^r \sigma_j(C_{\mathrm{pres}})\,v_{ij}\,u_j.
	\]
	Since $P_k$ projects onto $\mathrm{span}(u_1,\ldots,u_k)$, the residual is
	\[
	\|(I-P_k)p_i\|_2^2 \;=\; \sum_{j>k} \sigma_j^2(C_{\mathrm{pres}})\,(v_{ij})^2
	\;\le\; \sigma_{k+1}^2(C_{\mathrm{pres}})\sum_{j>k}(v_{ij})^2
	\;\le\; \sigma_{k+1}^2(C_{\mathrm{pres}}).
	\]
	Hence $\|(I-P_k)p_i\|_2\le \sigma_{k+1}(C_{\mathrm{pres}})$, and combining
	with Step 2 yields
	\[
	\|(W-W_0)p_i\|_2 \;\le\; \|Z^\star\|_2\,\sigma_{k+1}(C_{\mathrm{pres}}),
	\]
	as claimed in the theorem statement (with $W'$ replaced by $W$ in our notation).
\end{proof}

\paragraph{Remark:}
Our proposed method can \emph{exactly preserve} the top-$k$ principal directions of the preserved subspace. Moreover, for any preserved column $p_i$, the perturbation norm is upper bounded by the tail energy beyond the top-$k$ singular vectors, scaled by the problem-dependent factor $\|Z^\star\|_2$.
In the case where the rank of $C_{\mathrm{pres}}$ is less than or equal to $k$, the preserved set is \emph{exactly} preserved.

\subsection{Proof for Theorem~\ref{thm:erasure_bound}}
\begin{proof}
	\textbf{First identity.}
	By the closed form \textup{(CF)},
	\[
	Z^\star C_\perp^{(k)}
	= B\,(C_\perp^{(k)})^\top\Big(C_\perp^{(k)}(C_\perp^{(k)})^\top\Big)^{+} C_\perp^{(k)}.
	\]
	Recall the standard pseudoinverse projection identity:
	for any matrix $X$, $X^\top(XX^\top)^{+}X$ is the orthogonal projector
	onto $\mathrm{row}(X)$.
	With the thin SVD
	$C_\perp^{(k)} = U_{q_k}^{(k)}\Sigma_{q_k}^{(k)} V_{q_k}^{(k)\top}$,
	this projector equals $V_{q_k}^{(k)}V_{q_k}^{(k)\top}$.
	Hence
	\[
	Z^\star C_\perp^{(k)} \;=\; B\,V_{q_k}^{(k)}V_{q_k}^{(k)\top}.
	\]
	Using $C_\perp^{(k)}=U_{2,k}^\top C_{\mathrm{tgt}}$,
	\[
	(W-W_0)C_{\mathrm{tgt}}
	= \Delta W^\star C_{\mathrm{tgt}}
	= Z^\star U_{2,k}^\top C_{\mathrm{tgt}}
	= Z^\star C_\perp^{(k)}
	= B\,V_{q_k}^{(k)}V_{q_k}^{(k)\top},
	\]
	which proves the first statement.
	
	\smallskip
	\textbf{Per-column lower bound.}
	Fix a target column index $i$ and set
	$y_i:=V_{q_k}^{(k)\top} e_i$.
	By taking the $i$-th column of the previous identity,
	\[
	(W-W_0)c_i
	= (W-W_0)C_{\mathrm{tgt}} e_i
	= B\,V_{q_k}^{(k)}V_{q_k}^{(k)\top} e_i
	= B\,V_{q_k}^{(k)} y_i.
	\]
	Under the hypothesis that $B V_{q_k}^{(k)}$ has full column rank,
	its smallest singular value $\sigma_{\min}(B V_{q_k}^{(k)})$ is strictly positive, and the standard singular-value inequality yields
	\[
	\|(W-W_0)c_i\|_2
	= \|B V_{q_k}^{(k)} y_i\|_2
	\;\ge\; \sigma_{\min}(B V_{q_k}^{(k)}) \,\|y_i\|_2,
	\]
	which is exactly the claimed bound.
\end{proof}

\paragraph{Remark:}
Our method can exactly fit the part of the target update $B$ that lies in the identifiable row space
$\mathrm{row}(C_\perp^{(k)})$.
Under a mild compatibility assumption, each target column with nonzero leverage in this row space
($\|V_{q_k}^{(k)\top} e_i\|_2>0$) is guaranteed to be modified by at least
$\sigma_{\min}(B V_{q_k}^{(k)})$ times its leverage $\|V_{q_k}^{(k)\top} e_i\|_2$.
In particular, in the single-target case ($T=1$), if the target concept $c$ is not contained in the
top-$k$ preserved subspace (i.e., $U_k^\top c\neq c$), then the pseudo-inverse solution gives an
exact fit on the erased concept, i.e. $Wc=W_0 c^\star$.

\vspace{0.4in}
\section{Detailed Experiment Settings}
\subsection{Object Erasing}
To ensure a fair and controlled comparison across all erasure methods, we assign a fixed anchor concept to each target object category. This guarantees that UCE and DP operate under identical proxy vectors $v_i^*$, thereby isolating differences in performance to the erasure mechanisms themselves rather than to variations in replacement semantics. For every target concept, the chosen anchor represents a semantically neutral or structurally compatible object, enabling a clear evaluation of how effectively each method suppresses the target while redirecting the model toward the specified substitute.

\begin{table}[h]
	\centering
	\caption{Anchor concepts used for object-level concept erasure. Each target is paired with a fixed anchor to ensure consistent proxy vectors across UCE and DP.}
	\vspace{0.1in}
	\resizebox{\linewidth}{!}{
		\begin{tabular}{lcccccccccc}
			\toprule
			\textbf{Target Concept} 
			& Cassette Player & Chain Saw & Church & English Springer & French Horn & Garbage Truck & Gas Pump & Golf Ball & Parachute & Tench \\
			\midrule
			\textbf{Anchor Concept} 
			& Box & Stick & Temple & Cat & Drum & Bus & Dispenser & Sphere & Cloth & Cucumber \\
			\bottomrule
	\end{tabular}}
	\label{tab:anchors-horizontal}
\end{table}

The choice of anchors in Table \ref{tab:anchors-horizontal} follows the suggestions by ChatGPT 4.1, by considering the semantic meanings. These anchor selections are kept consistent across all visual and quantitative evaluations. This standardized setup allows us to directly compare how different methods respond to identical replacement instructions, and it highlights the extent to which each algorithm both eliminates the target concept and preserves the integrity of non-target categories.

Note that in the above settings, we simplify the first projection step by defining the safe subspace using a single anchor vector. An exploration of more complex constructions of the safe region is provided in Appendix~\ref{sec:ablations}.

\subsection{Evaluation Protocol for Object Erasure}
As alluded to earlier, our evaluation procedure is designed to be more stringent and representative of real-world semantic distinctions than those used in prior work. To reduce ambiguity in classifier predictions and diffusion outputs, we merge concept labels that are visually or semantically close. For example, categories such as ``cassette player,'' ``tape player,'' and ``tape'' are treated as a single unified concept. This consolidation avoids overcounting near-duplicate labels and results in a more accurate and conservative estimate of how well a model retains or erases a target concept. Consequently, the baseline Stable Diffusion model exhibits notably higher accuracy under our protocol, reflecting the increased ability of the unified classification task.

In addition, whereas earlier studies~\cite{bui2025age_iclr2025} often rely on Top-5 accuracy, we report exclusively Top-1 accuracy to provide a stricter and more discriminative measure of model behavior. This choice ensures that all reported metrics reflect precise, single-label correctness rather than broader category inclusion.

\vspace{0.1in}
\subsection{Artist Style Erasure}
For each artist, we construct an extended label set to evaluate style erasure more comprehensively. 
Using ChatGPT-generated descriptors, each label set includes: (i) the five primary artists considered in this study, 
(ii) general artistic categories such as ``classical art'' and ``modern art'', and 
(iii) artist-specific descriptive phrases that capture characteristic stylistic elements 
(e.g., ``dramatic lighting'', ``colorful landscape'', ``abstract portrait''). 
These labels allow us to assess both direct stylistic removal and potential drift toward semantically related artistic styles. 
The complete label sets used in our experiments are listed below.

\begin{itemize}
	\item \textbf{Andy Warhol:} 
	Pablo Picasso, Vincent van Gogh, Rembrandt, Andy Warhol, Caravaggio, 
	Campbell's soup can, Marilyn Monroe portrait, screen printing, celebrity portrait, 
	modern art, classical art.
	
	\item \textbf{Caravaggio:}
	Pablo Picasso, Vincent van Gogh, Rembrandt, Andy Warhol, Caravaggio, 
	Calling of Saint Matthew, Judith Beheading Holofernes, tenebrism, dramatic lighting, religious scene, 
	Baroque, Renaissance, 17th-century art, classical art, realism.
	
	\item \textbf{Pablo Picasso:}
	Pablo Picasso, Vincent van Gogh, Rembrandt, Andy Warhol, Caravaggio, 
	Guernica, Blue Period, Rose Period, African mask, guitar collage, 
	abstract portrait, geometric art, modern art, classical art.
	
	\item \textbf{Rembrandt:}
	Pablo Picasso, Vincent van Gogh, Rembrandt, Andy Warhol, Caravaggio, 
	The Night Watch, self-portrait, Saskia portrait, chiaroscuro, 
	Dutch master, Baroque, classical art, impressionism, cubism, modern art.
	
	\item \textbf{Vincent van Gogh:}
	Pablo Picasso, Vincent van Gogh, Rembrandt, Andy Warhol, Caravaggio, 
	generic impressionist painting, abstract expressionism, post-impressionist art, 
	colorful landscape, Starry Night scene, sunflower painting, wheat field artwork, 
	cypress trees, countryside scene, generic modern art, unspecified artist style.
\end{itemize}

\vspace{0.2in}

\section{Imperfection of Preservation}
\label{sec:imperfect_perservation}
\subsection{The Impacts of Positional Embedding}
Although the DP algorithm theoretically enforces orthogonality between erased and preserved subspaces, \emph{perfect preservation} of non-target concepts is not always achieved in practice. 
This discrepancy primarily arises from the \emph{positional embedding structure} in diffusion models, where each token embedding is not used in isolation but is \emph{summed} with its positional encoding before entering the attention and MLP layers. 

Formally, let the raw content embedding for a token be \( c_i \in \mathbb{R}^n \) and its positional embedding be \( q_i \in \mathbb{R}^n \). 
The effective input to the model is then
\begin{equation}
	z_i = e_i + q_i.
\end{equation}
During concept erasure, DP computes an update \( \Delta W \) satisfying the preservation constraint
\begin{equation}
	\Delta W e_i = 0,
\end{equation}
which guarantees that all preserved content embeddings \( C_{\mathrm{pres}} \) remain unaffected in the ideal case. 
However, in the actual model, the transformation is applied to the fused embedding \( z_i \), not to \( c_i \) alone. 

Since \( q_i \) is not fixed (the word can appear at arbitrary location) and generally \emph{not orthogonal} to the erased directions, the effective transformation satisfies
\begin{equation}
	\Delta W  z_i \neq 0.
\end{equation}
This residual term introduces a small coupling between erased and preserved subspaces, leading to the minor performance drop observed empirically. Note the non-target concepts can appear in any position, and in general, it is not feasible to also require $\Delta W q_i =0$ for all $q_i$.

Importantly, this limitation is \emph{not unique to DP}. 
Closed-form projection methods such as UCE are subject to the same positional interaction, since they also operate in the linearized embedding space and do not explicitly disentangle positional components. 
In other words, while both DP and UCE guarantee subspace orthogonality for pure content embeddings, the \emph{additive nature of positional encodings} inherently prevents perfect preservation in diffusion architectures.

\subsection{Self-Attention in Encoder}
Specifically, the CLIP text encoder used in diffusion models applies multiple \emph{self-attention blocks} when producing text embeddings, so the resulting embedding of each token is no longer independent of the others. As a result, token representations become contextualized and partially mixed across the prompt. For example, in the prompt ``An image of Church'', the embedding associated with ``Church'' after encoding is not merely the isolated concept embedding of ``Church'', but a contextualized representation that also carries weak information from the surrounding tokens through self-attention. Consequently, even if $\Delta W$ is constructed to be orthogonal to non-target concept embeddings in principle, the actual encoded representations processed by the model may still be slightly perturbed. Together with the effect of positional embeddings, this token mixing provides a practical explanation for the small but consistent deviations from perfect preservation observed in our experiments.

\vspace{0.1in}
\subsection{Why Editing on Embedding Layers Produces Better Preservation.}
Operating directly on the \emph{embedding layer} of the encoder avoids the positional–intervene concepts before the positional embedding and self-attention. At the embedding layer, the model processes the content vectors $c_i$ \emph{before} they are fused with positional embeddings. This allows the preservation constraint to be enforced exactly.

When concept erasure is applied at the embedding layer, the update $\Delta W_{\mathrm{emb}}$ acts only on $c_i$:
\[
z_i = (W_{\mathrm{emb}} + \Delta W_{\mathrm{emb}})\, c_i + q_i.
\]
The preservation condition becomes
\[
\Delta W_{\mathrm{emb}}\, C_{\mathrm{pres}} = 0,
\]
which directly implies
\[
(W_{\mathrm{emb}} + \Delta W_{\mathrm{emb}})\, C_{\mathrm{pres}}
= W_{\mathrm{emb}}\, C_{\mathrm{pres}}.
\]
Since positional embeddings are added \emph{after} the content projection, they do not interfere with this constraint. The effective representation remains
\[
z_i' = W_{\mathrm{emb}}\, C_{\mathrm{pres}} + q_i,
\]
which is identical to the original representation for all preserved concepts.

These results explain why embedding-level editing consistently yields more stable preservation behavior: it achieves exact orthogonality for content embeddings, results in cleaner and more localized updates, and eliminates interference caused by positional encodings, as further demonstrated in our FLUX visualizations (Appendix~\ref{sec:vis_flux}).

\vspace{0.4in}

\section{Why UCE Underperforms on \textit{Cassette Player} and \textit{Golf Ball}}
\label{sec:tuning}

\subsection{Last Word Erasing}
Although UCE generally provides strong erasure performance, we observe two notable failure
cases in our experiments: \emph{Cassette Player} and \emph{Golf Ball}. Upon closer inspection, 
these failures arise from the way UCE constructs the concept embedding used for editing.

\paragraph{UCE uses only the last token embedding.}
In the official implementation of UCE, the concept embedding for a multi-word prompt is 
constructed by selecting \emph{only the last token} of the prompt. The relevant code snippet 
from the official release is shown below:

\begin{lstlisting}[style=pythonstyle]
	t_emb = pipe.encode_prompt(
		prompt=e,
		device=device,
		num_images_per_prompt=1,
		do_classifier_free_guidance=False)
	
	last_token_idx = (
		pipe.tokenizer(
			e,
			padding="max_length",
			max_length=pipe.tokenizer.model_max_length,
			truncation=True,
			return_tensors="pt",
		)["attention_mask"]
	).sum() - 2
	
	uce_erase_embeds[e] = t_emb[0][:, last_token_idx, :]
\end{lstlisting}

In particular, the above codes effectively select the ``last token index'' for all target concept. For many artistic concepts such as ``Van Gogh'' or ``Picasso'', this design choice is 
relatively benign because the semantic meaning is concentrated in the final token. 
However, for compound nouns commonly found in the object-erasing benchmark, the 
last token does \emph{not} capture the dominant semantics.

\paragraph{Why this fails for ``Cassette Player'' and ``Golf Ball''.}
In both of these categories, the first token carries the primary semantic load: 
``cassette'' in ``cassette player'' and ``golf'' in ``golf ball''.  
UCE, however, replaces only the second token.  
For example:
\begin{itemize}
	\item Replacing \texttt{ball} with \texttt{sphere} leads to prompts interpreted 
	by the model as ``golf sphere'', which often continues to produce 
	golf-ball--like objects. Such outputs remain highly classifiable as 
	\emph{golf ball} by the pretrained ResNet-50 classifier.
	\item Similarly, replacing \texttt{player} in \texttt{cassette player} fails 
	to remove the defining visual features associated with the first token, 
	causing the resulting images to retain the appearance of a cassette-like object.
\end{itemize}

This explains the substantially higher erasing accuracy for UCE on these two categories 
reported in Table~\ref{tab:sd14_objects}.

\paragraph{Replacing all tokens improves UCE in these cases.}
For completeness, we run an additional experiment in which UCE replaces the 
embeddings of \emph{all} tokens in the target phrase rather than only the last one.  
Under this corrected setting:
\begin{itemize}
	\item The erased accuracy for \emph{Golf Ball} improves dramatically, 
	decreasing from $12.0$ to $2.0$, which is comparable to our DP method.
	\item The accuracy drop on non-target concepts is also reduced, 
	improving from $20.8$ to $16.2$.
\end{itemize}

\vspace{0.2in}

\subsection{DP Still Achieves Better Preservation}
Despite these improvements, UCE still introduces substantially larger perturbations to 
non-target concepts. Under the same corrected setting, DP achieves a much lower 
preservation drop of only $3.3$, demonstrating that even with improved token handling, 
UCE's single-projection update remains more disruptive to unrelated concept directions.

This analysis confirms that UCE's underperformance is primarily due to its reliance on 
the last-token embedding, and that our DP method not only avoids this limitation but 
also maintains significantly better preservation of non-target concepts.

\clearpage
\section{Other Metrics for Image Assessment}
\label{sec:other_metrics}
Beyond classification-based accuracy metrics used in the main experiments, we further evaluate the visual quality and perceptual fidelity of generated images of the FLUX model using several widely adopted generative-model metrics: LPIPS~\cite{zhang2018lpips}, PSNR~\cite{bovik2000handbook}, SSIM~\cite{wang2004image}, and FID~\cite{heusel2017gans}.

LPIPS measures perceptual similarity using deep feature distances, providing sensitivity to semantic changes in image content. PSNR and SSIM quantify pixel-level and structural similarity, respectively, enabling assessment of how closely the edited outputs preserve low-level visual attributes. FID evaluates realism at the distribution level by comparing feature statistics of generated images to those of real images. Together, these metrics offer a complementary perspective on the impact of concept erasure, allowing us to assess not only whether the target concept is successfully suppressed, but also how strongly each method affects the overall perceptual quality and statistical properties of non-target generations. 

It is important to emphasize that these metrics are evaluated \textbf{only on the non-target concepts}. 
Measures such as LPIPS~\cite{zhang2018lpips}, PSNR~\cite{bovik2000handbook}, 
SSIM~\cite{wang2004image}, and FID~\cite{heusel2017gans} quantify differences between 
images generated \emph{before and after} concept erasure, and therefore assume that the 
underlying semantic content should remain consistent across the two states. This assumption 
naturally holds for non-target concepts, where the objective is to preserve visual fidelity 
and minimize unintended perturbations.

In contrast, applying these metrics to the \emph{target} concepts would be inappropriate, 
since concept erasure is explicitly designed to alter (and ideally remove) the original content. 
The images before and after erasure are thus expected to differ substantially, rendering such 
reconstruction-based metrics neither meaningful nor interpretable for evaluating erasure quality.

\begin{table}[h]
	\centering
	\small
	\setlength{\tabcolsep}{4pt}
	\caption{Comparison of DP and UCE across preserved concepts using \textbf{LPIPS}, \textbf{PSNR}, \textbf{SSIM}, and \textbf{FID}. Lower is better for LPIPS and FID; higher is better for PSNR and SSIM.}
	\begin{tabular}{l | cc | cc | cc | cc}
		\toprule
		\multirow{2}{*}{Concept} &
		\multicolumn{2}{c|}{\textbf{LPIPS} $\downarrow$} &
		\multicolumn{2}{c|}{\textbf{PSNR} (dB) $\uparrow$} &
		\multicolumn{2}{c|}{\textbf{SSIM} $\uparrow$} &
		\multicolumn{2}{c}{\textbf{FID} $\downarrow$} \\
		& UCE & DP & UCE & DP & UCE & DP & UCE & DP \\
		\midrule
		Cassette Player   & $0.1170_{\pm 0.0287}$ & $\mathbf{0.0506}_{\pm 0.0248}$ & $19.60_{\pm 2.47}$ & $\mathbf{24.61}_{\pm 4.45}$ & $0.8019_{\pm 0.0580}$ & $\mathbf{0.8883}_{\pm 0.0559}$ & 14.29 & $\mathbf{8.57}$ \\
		Chain Saw         & $0.1113_{\pm 0.0262}$ & $\mathbf{0.0362}_{\pm 0.0339}$ & $19.79_{\pm 2.47}$ & $\mathbf{26.63}_{\pm 4.90}$ & $0.8060_{\pm 0.0600}$ & $\mathbf{0.9141}_{\pm 0.0518}$ & 11.87 & $\mathbf{5.44}$ \\
		Church           & $0.1222_{\pm 0.0282}$ & $\mathbf{0.0388}_{\pm 0.0208}$ & $19.33_{\pm 2.41}$ & $\mathbf{25.73}_{\pm 4.24}$ & $0.7964_{\pm 0.0617}$ & $\mathbf{0.9091}_{\pm 0.0461}$ & 14.25 & $\mathbf{6.86}$ \\
		English Springer  & $0.1225_{\pm 0.0397}$ & $\mathbf{0.0836}_{\pm 0.0378}$ & $19.36_{\pm 2.51}$ & $\mathbf{21.68}_{\pm 3.19}$ & $0.7911_{\pm 0.0608}$ & $\mathbf{0.8404}_{\pm 0.0600}$ & 14.74 & $\mathbf{11.28}$ \\
		French Horn       & $0.1211_{\pm 0.0317}$ & $\mathbf{0.0367}_{\pm 0.0202}$ & $19.75_{\pm 2.40}$ & $\mathbf{26.43}_{\pm 4.03}$ & $0.7995_{\pm 0.0646}$ & $\mathbf{0.9110}_{\pm 0.0483}$ & 14.92 & $\mathbf{6.76}$ \\
		Garbage Truck     & $0.1204_{\pm 0.0396}$ & $\mathbf{0.0465}_{\pm 0.0313}$ & $19.81_{\pm 2.50}$ & $\mathbf{25.29}_{\pm 3.73}$ & $0.8040_{\pm 0.0668}$ & $\mathbf{0.8998}_{\pm 0.0616}$ & 14.56 & $\mathbf{7.99}$ \\
		Gas Pump          & $0.1155_{\pm 0.0301}$ & $\mathbf{0.0470}_{\pm 0.0317}$ & $19.69_{\pm 2.36}$ & $\mathbf{25.29}_{\pm 4.67}$ & $0.8009_{\pm 0.0610}$ & $\mathbf{0.8951}_{\pm 0.0599}$ & 13.07 & $\mathbf{7.59}$ \\
		Golf Ball         & $0.1228_{\pm 0.0268}$ & $\mathbf{0.0362}_{\pm 0.0123}$ & $18.98_{\pm 1.63}$ & $\mathbf{25.16}_{\pm 1.96}$ & $0.7885_{\pm 0.0498}$ & $\mathbf{0.9098}_{\pm 0.0254}$ & 15.30 & $\mathbf{6.90}$ \\
		Parachute        & $0.1159_{\pm 0.0337}$ & $\mathbf{0.0682}_{\pm 0.0192}$ & $19.45_{\pm 2.50}$ & $\mathbf{22.34}_{\pm 2.56}$ & $0.7958_{\pm 0.0599}$ & $\mathbf{0.8603}_{\pm 0.0395}$ & 14.54 & $\mathbf{9.93}$ \\
		Tench            & $\mathbf{0.1121}_{\pm 0.0287}$ & $0.1345_{\pm 0.0588}$ & $\mathbf{19.66}_{\pm 2.48}$ & $18.99_{\pm 3.24}$ & $\mathbf{0.8016}_{\pm 0.0633}$ & $0.7769_{\pm 0.0869}$ & $\mathbf{13.61}$ & 14.97 \\
		\midrule
		\textbf{Average} &
		$\mathbf{0.1181}$ & $0.0578$ &
		$19.54$ & $\mathbf{24.22}$ &
		$0.7996$ & $\mathbf{0.8805}$ &
		14.08 & $\mathbf{8.73}$ \\
		\bottomrule
	\end{tabular}
	\label{tab:dp_uce_metrics}
\end{table}

Table~\ref{tab:dp_uce_metrics} reports a comprehensive comparison between DP and UCE
across ten preserved concepts, evaluated using LPIPS, PSNR, SSIM, and FID. For metrics where lower values indicate better
performance (LPIPS and FID), DP consistently outperforms UCE on
nine out of ten concepts. The only exception is the ``Tench'' class, where UCE
achieves a slightly lower LPIPS score. On average, DP achieves a substantially
lower LPIPS score ($0.0578$ vs.\ $0.1181$), indicating a significantly improved
perceptual similarity to the target images.

For distortion-based metrics where higher values indicate better image fidelity
(PSNR and SSIM), DP again demonstrates favorable behavior. DP
achieves higher PSNR and SSIM values on all concepts except ``Tench'', showing a
robust improvement in reconstruction fidelity. Averaged across all concepts, DP
improves PSNR by approximately $+4.7$~dB over UCE ($24.22$ vs.\ $19.54$) and 
achieves a higher SSIM score ($0.8805$ vs.\ $0.7996$), demonstrating consistently
better structural alignment and visual coherence.

In terms of generative quality, DP achieves notably lower FID scores on
nine out of ten concepts, again with the sole exception of ``Tench''. The average
FID of DP ($8.73$) is substantially lower than that of UCE ($14.08$), indicating
that DP produces more realistic and distribution-consistent outputs.

Overall, the results show that \emph{DP outperforms UCE across all four metrics
	and on nearly all individual concepts}. This demonstrates that DP provides 
superior perceptual similarity, lower distortion, higher structural fidelity, 
and more realistic generative quality when preserving concept-specific image 
content.

\clearpage
\section{Additional Experiments on SD 1.5}
\label{sec:sd15}
While the main paper focuses on Stable Diffusion~1.4 due to its widespread use in prior 
concept-erasure research and its role as a canonical benchmark, we also conduct a parallel 
set of experiments on Stable Diffusion~1.5 to assess the robustness and generality of our 
approach. The SD~1.5 backbone differs from SD~1.4 in both training distribution and visual 
appearance characteristics, making it a meaningful testbed for evaluating consistency across 
model variants.

\begin{table*}[thpb]
	\centering
	\caption{Results on SD~1.5 for all algorithms. Each block includes both original and post-update accuracies. 
		Left: \textbf{Target Class} shows erasure performance (\textbf{Erased Accuracy}~$\downarrow$). 
		Right: \textbf{Other Classes} reports the accuracy of preserved concepts (\textbf{Preservation Drop}~$\downarrow$). Lower values indicate stronger erasure and better preservation.}
	\label{tab:sd15_objects}
	\resizebox{0.98\textwidth}{!}{
		\begin{tabular}{l|c|ccccc|c|ccccc}
			\toprule
			\multirow{2}{*}{\textbf{Object}} &
			\multicolumn{1}{c|}{\textbf{Target Class}} &
			\multicolumn{5}{c|}{\textbf{Erased Accuracy (\%)~$\downarrow$}} &
			\multicolumn{1}{c|}{\textbf{Other Classes}} &
			\multicolumn{5}{c}{\textbf{Preservation Drop (\%)~$\downarrow$}} \\
			\cmidrule(lr){2-2}\cmidrule(lr){3-7}\cmidrule(lr){8-8}\cmidrule(lr){9-13}
			& Original & ESD & CP & AGE & UCE & DP & Original & ESD & CP & AGE & UCE & DP \\
			\midrule
			
			Cassette Player 
			& $60.0$ 
			& $12.0_{\pm1.0}$ & $4.0_{\pm0.5}$ & $33.0_{\pm3.7}$ & $41.0$ & $\mathbf{0.0}$
			& $88.8$ 
			& $25.9_{\pm2.1}$ & $27.4_{\pm1.9}$ & $6.3_{\pm0.4}$ & $20.4$ & $\mathbf{4.2}$ \\
			
			Chain Saw 
			& $76.0$ 
			& $1.0_{\pm0.0}$ & $\mathbf{0.0}_{\pm0.0}$ & $2.0_{\pm0.5}$ & $\mathbf{0.0}$ & $\mathbf{0.0}$
			& $87.0$ 
			& $18.1_{\pm1.8}$ & $29.1_{\pm2.2}$ & $1.6_{\pm0.2}$ & $2.3$ & $\mathbf{0.2}$ \\
			
			English Springer 
			& $95.0$ 
			& $1.0_{\pm0.0}$ & $\mathbf{0.0}_{\pm0.0}$ & $\mathbf{0.0}_{\pm0.0}$ & $\mathbf{0.0}$ & $\mathbf{0.0}$
			& $84.9$ 
			& $43.9_{\pm2.4}$ & $32.6_{\pm1.9}$ & $1.9_{\pm0.3}$ & $2.1$ & $\mathbf{-0.2}$ \\
			
			Parachute 
			& $93.0$ 
			& $5.0_{\pm0.7}$ & $5.0_{\pm0.5}$ & $2.0_{\pm0.5}$ & $\mathbf{0.0}$ & $\mathbf{0.0}$
			& $85.1$ 
			& $10.4_{\pm1.1}$ & $37.7_{\pm2.8}$ & $1.3_{\pm0.2}$ & $\mathbf{0.3}$ & $\mathbf{0.3}$ \\
			
			French Horn 
			& $99.0$ 
			& $2.0_{\pm0.3}$ & $\mathbf{0.0}_{\pm0.0}$ & $3.0_{\pm1.0}$ & $\mathbf{0.0}$ & $1.0$
			& $84.4$ 
			& $17.1_{\pm1.6}$ & $29.3_{\pm2.3}$ & $4.7_{\pm0.4}$ & $\mathbf{1.6}$ & $2.4$ \\
			
			Golf Ball 
			& $100.0$
			& $17.0_{\pm1.5}$ & $12.0_{\pm1.0}$ & $2.0_{\pm0.3}$ & $61.0$ & $\mathbf{0.0}$
			& $84.3$
			& $16.2_{\pm1.3}$ & $27.0_{\pm2.0}$ & $4.9_{\pm0.5}$ & $7.6$ & $\mathbf{4.8}$ \\
			
			Garbage Truck 
			& $93.0$
			& $1.0_{\pm0.3}$ & $\mathbf{0.0}_{\pm0.0}$ & $11.0_{\pm1.7}$ & $\mathbf{0.0}$ & $\mathbf{0.0}$
			& $85.1$
			& $18.0_{\pm1.5}$ & $39.2_{\pm3.1}$ & $1.1_{\pm0.2}$ & $-1.7$ & $\mathbf{-1.9}$ \\
			
			Tench 
			& $80.0$
			& $\mathbf{0.0}_{\pm0.0}$ & $\mathbf{0.0}_{\pm0.0}$ & $8.0_{\pm1.7}$ & $\mathbf{0.0}$ & $\mathbf{0.0}$
			& $86.6$
			& $23.6_{\pm2.2}$ & $25.4_{\pm1.7}$ & $2.9_{\pm0.3}$ & $6.0$ & $\mathbf{2.2}$ \\
			
			Gas Pump 
			& $76.0$ 
			& $10.0_{\pm1.0}$ & $2.0_{\pm1.0}$ & $\mathbf{1.3}_{\pm0.3}$ & $3.0$ & $4.0$
			& $87.0$
			& $13.6_{\pm1.3}$ & $34.8_{\pm2.8}$ & $1.6_{\pm0.2}$ & $4.6$ & $\mathbf{1.4}$ \\
			
			Church 
			& $88.0$
			& $25.0_{\pm2.0}$ & $\mathbf{0.0}_{\pm0.0}$ & $13.0_{\pm1.0}$ & $\mathbf{0.0}$ & $2.0$
			& $85.8$
			& $27.0_{\pm2.4}$ & $35.8_{\pm3.3}$ & $4.6_{\pm0.4}$ & $8.1$ & $\mathbf{4.3}$ \\
			
			\midrule
			\textbf{Mean} 
			& $86.0$ 
			& $7.4$ & $2.3$ & $7.5$ & $10.5$ & $\mathbf{0.7}$
			& $85.9$ 
			& $21.4$ & $31.8$ & $3.1$ & $5.1$ & $\mathbf{1.8}$ \\
			\bottomrule
		\end{tabular}
		}
\end{table*}

The results on Stable Diffusion~1.5 in Table~\ref{tab:sd15_objects} exhibit trends consistent with those observed for SD~1.4, further confirming that DP generalizes effectively across different diffusion backbones. Across all ten evaluated object categories, DP achieves the \emph{lowest mean erased accuracy (0.7\%)}, outperforming all competing baselines by a substantial margin. In many cases, including \emph{Cassette Player}, \emph{Chain Saw}, \emph{English Springer}, \emph{Parachute}, \emph{Golf Ball}, \emph{Garbage Truck}, and \emph{Tench}, DP completely suppresses the target object, achieving a residual accuracy of 0.0\%. Even in more challenging categories such as \emph{Church} and \emph{Gas Pump}, DP remains competitive, demonstrating that the double-projection mechanism continues to yield effective erasure despite architectural differences between SD~1.4 and SD~1.5.

In terms of preserving non-target concepts, DP again provides the strongest performance. Iterative or pruning-based approaches such as ESD and CP introduce substantial collateral degradation, often exceeding a large preservation drop. UCE performs better but still yields an average drop of 5.1\%. In contrast, DP maintains an average degradation of only 1.8\%, several times lower than any other method. In multiple categories, including \emph{English Springer}, \emph{Garbage Truck}, and \emph{Tench}, DP results in slightly \emph{negative} preservation drop, indicating that the overall accuracy for other non-target concepts increases.

Overall, two clear behavioral clusters emerge. Methods like CP and ESD display high variance and significant unintended perturbations due to their reliance on broad, iterative parameter modifications. UCE performs reasonably on simpler single-token concepts but struggles with multi-token cases (e.g., \emph{Golf Ball}, \emph{Cassette Player}), reflecting the token-selection limitations discussed previously. By contrast, DP remains uniformly stable: its closed-form update isolates the erasure direction while explicitly preserving the orthogonal subspace, enabling it to maintain high fidelity even when concept representations are semantically entangled.

These findings reinforce the central message of this work: DP provides strong, architecture-agnostic concept erasure while consistently minimizing unintended degradation, validating the robustness of the proposed double-projection framework across both classical and updated diffusion model variants.

\clearpage
\section{Visualization for Stable Diffusion 1.4}
\label{sec:vis_sd}
\begin{figure}[h]
	\centering
	\includegraphics[width=1\linewidth]{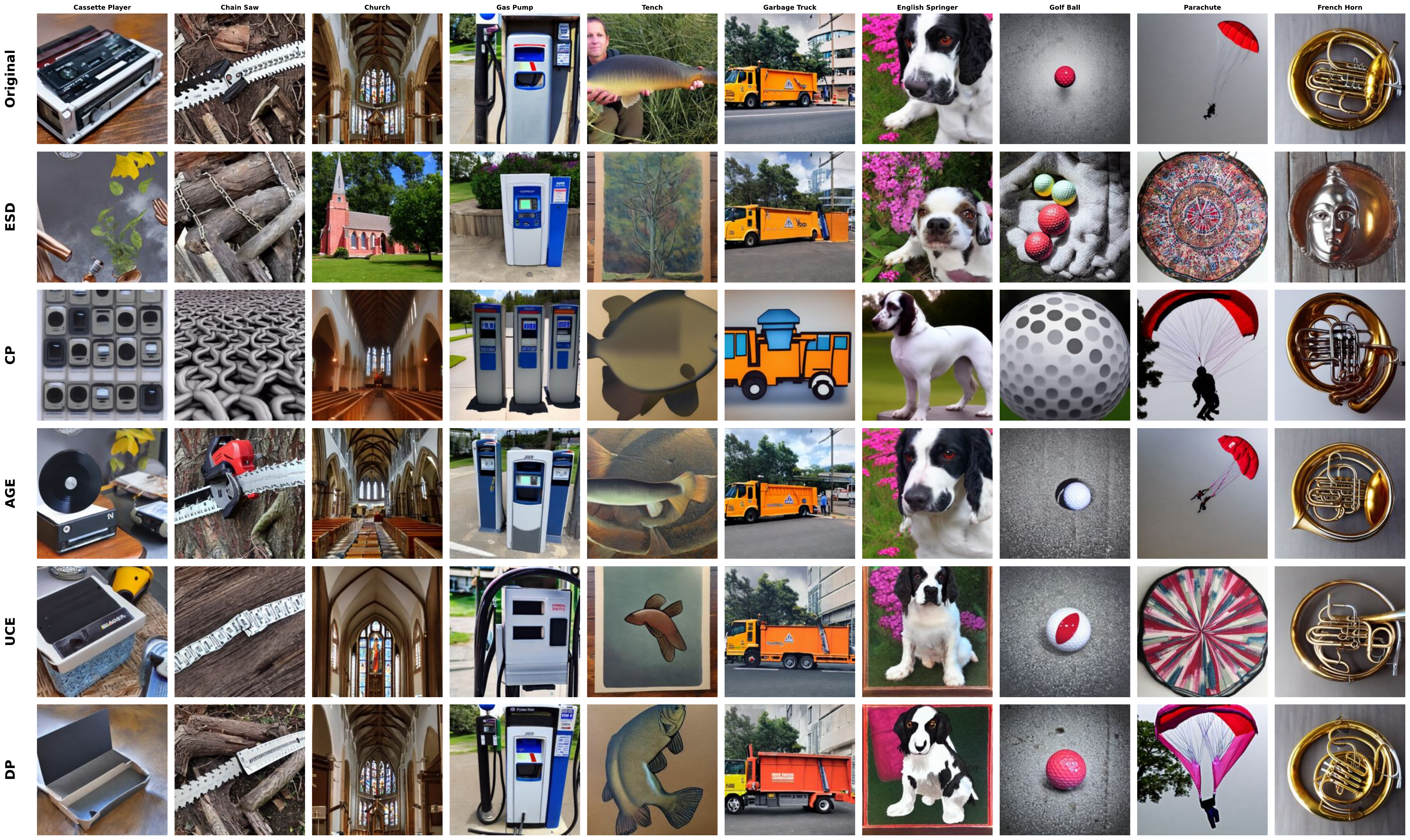}
	\caption{Concept erasure on “Cassette Player” with anchor concept “Box”. The first column shows the target concept to be erased.}
	\label{fig:sd14comparecassetteplayer}
\end{figure}

Figure~\ref{fig:sd14comparecassetteplayer} presents qualitative visualizations for all five erasure methods, ESD, CP, AGE, UCE, and the proposed DP, using the cassette player category as the target concept. For each method, we display the first generated sample from the ten evaluated categories, with the objective of suppressing the target concept in the first column while leaving the remaining nine concepts unaffected.

For the target concept, the closed-form approaches UCE and DP both succeed in preserving the overall structural layout of the original image while substituting the target semantics with the designated anchor concept. Notably, DP produces outputs that align more faithfully with the anchor concept box, yielding clearer and more coherent substitutions than those produced by UCE. This behavior visually corroborates the quantitative results reported earlier, where DP demonstrated stronger erasure performance on the target class.

For the nine non-target concepts, however, all methods exhibit some degree of perturbation. These deviations are especially pronounced for CP, whose outputs diverge substantially from the original images, indicating weaker preservation capability. DP also shows mild perturbations on non-target categories, though the changes are considerably smaller and do not alter the primary semantics of the generated content. 

In contrast, when concept erasure is applied directly to the embedding layer, as demonstrated in FLUX (see Appendix~\ref{sec:vis_flux}), the model preserves non-target concepts much more reliably. This comparison suggests that interventions performed within deeper architectural components, such as attention blocks, are more likely to propagate unintended changes throughout the network. Even with closed-form constraints, edits at these deeper layers can influence representations beyond the targeted concept.

\section{Visualization for FLUX}
\label{sec:vis_flux}
\begin{figure}[htb]
	\centering
	\begin{subfigure}[b]{\textwidth} 
		\centering
		\includegraphics[width=\linewidth]{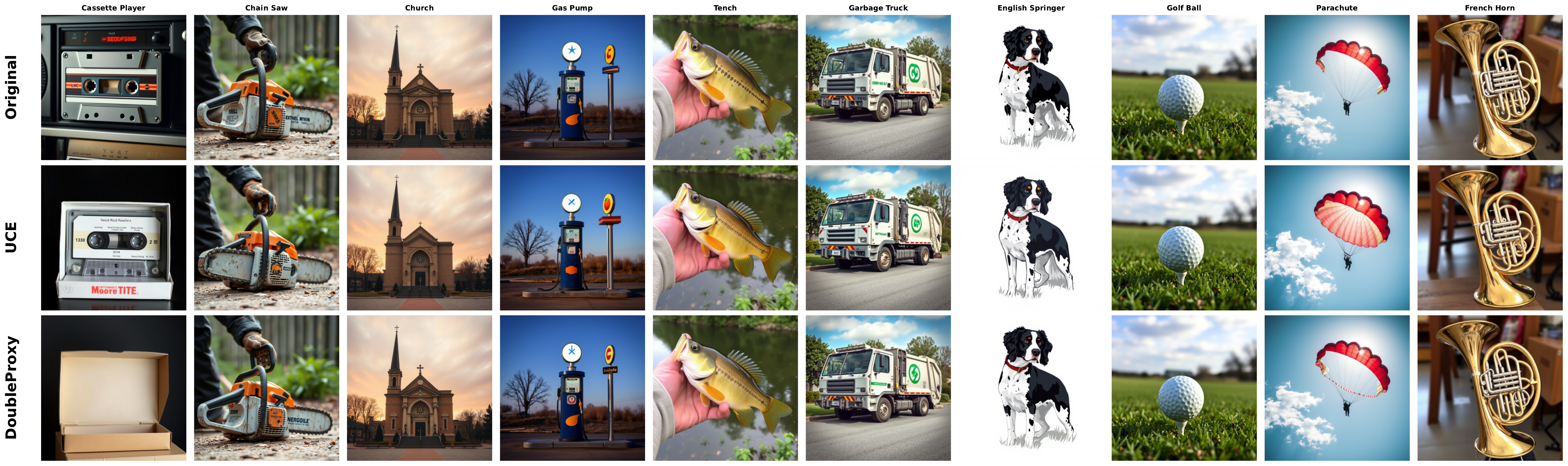}
		\caption{Concept erasure on “Cassette Player” with anchor concept “Box”. The first column shows the target concept to be erased.}
		\label{fig:sub1}
	\end{subfigure}
	\hfill 
	\begin{subfigure}[b]{\textwidth}
		\centering
		\includegraphics[width=\linewidth]{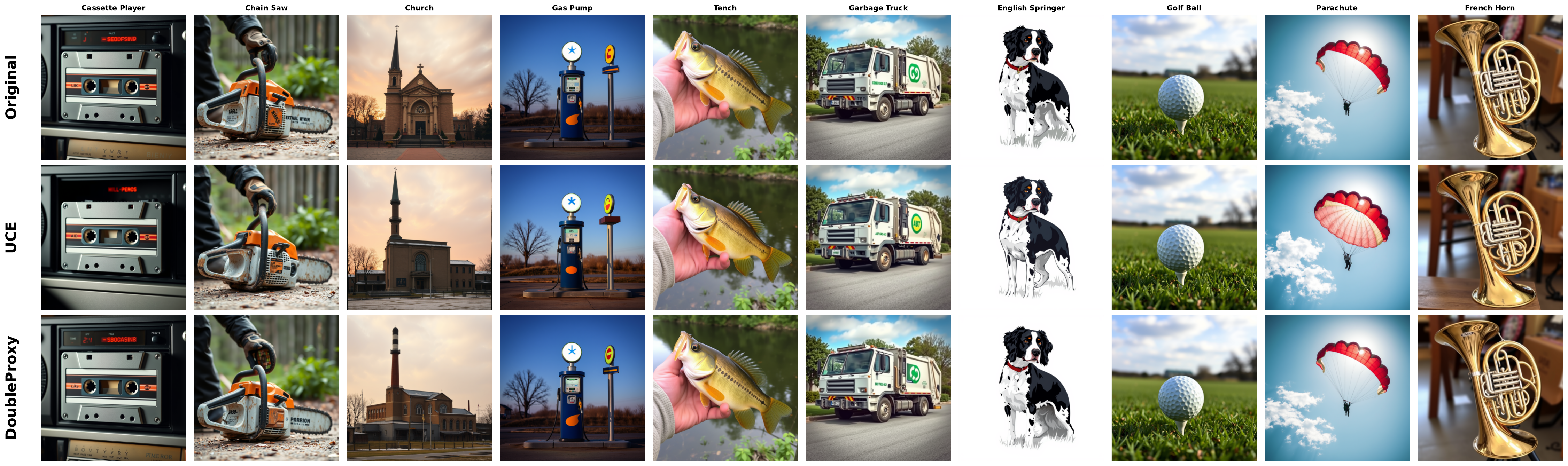}
		\caption{Concept erasure on “Church” with anchor concept "Factory". The third column shows the target concept to be erased.}
		\label{fig:sub2}
	\end{subfigure}
	\hfill
	\begin{subfigure}[b]{\textwidth}
		\centering
		\includegraphics[width=\linewidth]{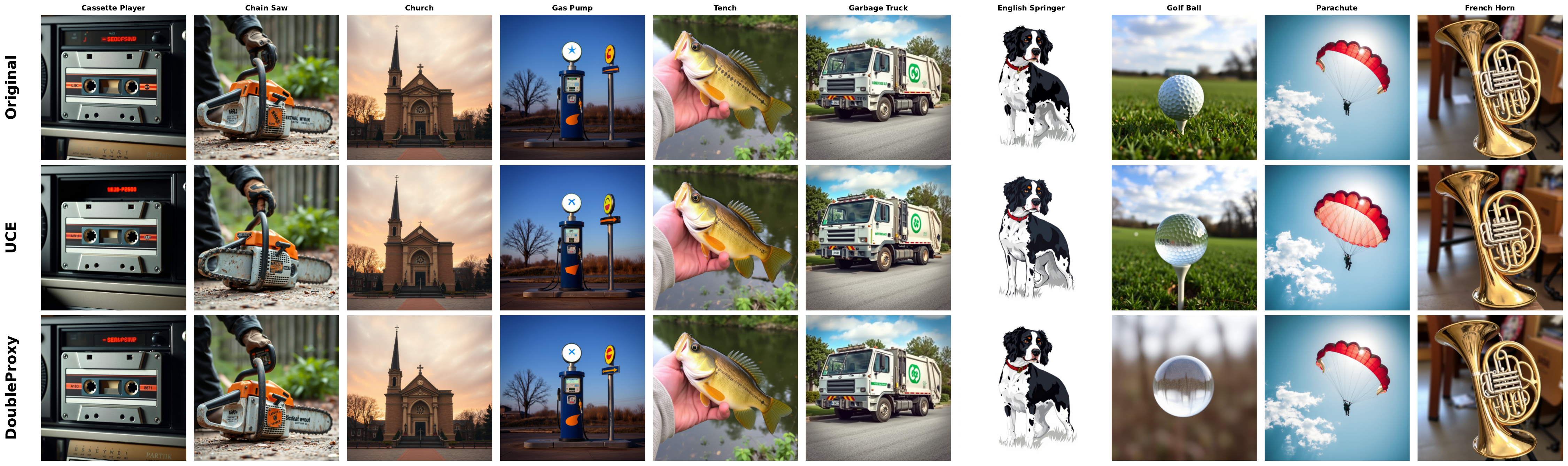}
		\caption{Concept erasure on “Golf Ball” with anchor concept "Sphere". The eighth column shows the target concept to be erased.}
		\label{fig:sub3}
	\end{subfigure}
	\caption{Concept erasure on a few target concepts with FLUX. Results demonstrate that the proposed DP method successfully suppresses the target concept while generating images faithful to the replaced anchor concept.} 
	\label{fig:flux_three_images}
\end{figure}

To further illustrate the qualitative behavior of concept erasure, Figure~\ref{fig:flux_three_images} presents visualizations for three representative target concepts—cassette player, church, and golf ball. For fairness and controlled comparison, each target concept is paired with a fixed anchor concept used as the replacement proxy $v_i^*$. Across all examples, the proposed DP method consistently removes the target concept while producing images that align closely with the intended anchor semantics. In contrast, UCE often retains recognizable traces of the original concept, indicating incomplete suppression.

This difference is most evident in Figure~\ref{fig:sub2}: although both methods attempt to erase the concept church using factory as the anchor, DP produces structures that clearly resemble industrial buildings, whereas UCE-generated images continue to exhibit architectural features characteristic of churches. Similar patterns appear across the remaining examples—DP reliably redirects the model’s output toward the anchor concept, while UCE frequently preserves residual cues associated with the target. These qualitative results reinforce our quantitative findings, demonstrating that DP achieves more effective concept removal and cleaner semantic substitution, thereby validating its superior erasure capability.

\section{Ablation Studies}
\label{sec:ablations}

\subsection{Ablation on the First Projection}
The first component of the DP framework is the first projection step, where the target concept is mapped into a user-defined safe subspace. In the main paper, we adopt a simplified configuration in which the safe subspace is defined by a single anchor concept, mirroring the setup used in UCE to ensure a fair comparison. However, the DP formulation naturally supports larger and more expressive safe regions, which may improve preservation fidelity or alter erasure behavior depending on the geometry of the selected subspace.

To illustrate this effect, we perform an ablation study on the target concept “Church”. Specifically, we compare two settings:
\begin{enumerate}
	\item A multi-vector safe region constructed from the concepts ``tower'' and ``factory''.
	\item A single-vector safe region using only “factory” as the anchor (as in the main experiments).
\end{enumerate}

\begin{table}[h]
	\centering
	\caption{Ablation study on the construction of the safe subspace for the target concept \emph{Church}. 
		We compare a multi-vector safe region (Tower + Factory) with a single-vector anchor (Factory). 
		Left block reports erasure performance; right block reports preservation quality on non-target classes.}
	\label{tab:ablation_safe_region}
	\resizebox{0.7\linewidth}{!}{
		\begin{tabular}{l|ccc|ccc}
			\toprule
			\multirow{2}{*}{\textbf{Method}} &
			\multicolumn{3}{c|}{\textbf{Target Concept: Church}} &
			\multicolumn{3}{c}{\textbf{Other Classes (Avg.)}} \\
			\cmidrule(lr){2-4} \cmidrule(lr){5-7}
			& \textbf{Original} & \textbf{After Erasure}  $\downarrow$ & \textbf{Drop} $\uparrow$
			& \textbf{Original} & \textbf{After Erasure} $\uparrow$ & \textbf{Drop} $\downarrow$ \\
			\midrule
			\textbf{Factory only} &
			99.0\% &  12.0\% & 87.0\% & 
			89.7\% & 89.4\% & 0.3\% \\
			
			\textbf{Tower + Factory} &
			99.0\% & 3.0\% & 96.0\% & 
			89.7\% & 89.6\% & 0.1\% \\
			
			\bottomrule
		\end{tabular}
	}
\end{table}

Table~\ref{tab:ablation_safe_region} demonstrates that broader safe subspaces yield more effective erasure while preserving non-target concepts with minimal degradation. Even so, using a single anchor vector often remains the preferred strategy in practice due to its simplicity and ease of deployment.

\subsection{Ablation on the Second Projection}
The role of the second projection can be directly assessed by comparing our method with UCE (e.g, 
Tables~\ref{tab:sd14_objects}). 
Since both approaches use the \emph{same} anchor vectors $v_i^*$ and differ only in the presence of the 
nullspace projection, these results naturally serve as ablation studies isolating the contribution of the 
second projection. This performance gap highlights the necessity of the second projection: without restricting updates to the 
left nullspace of preserved embeddings, as in UCE, concept removal introduces noticeable interference to 
unrelated representations. In contrast, enforcing the nullspace constraint ensures that modifications remain 
geometrically orthogonal to the preserved subspace, resulting in significantly more stable and predictable 
behavior across both diffusion and flow-matching architectures.

\section{Generation on Other Classes}
\label{sec:generations}
To more comprehensively evaluate the generality of our concept-erasure framework, we conduct an additional set of experiments on a broader collection of ImageNet classes beyond these ten categories used in the main paper. In particular, we focus on the FLUX model for UCE and DP methods. These experiments serve two primary purposes. First, they allow us to examine the stability of our method when applied across a wider range of visual concepts with diverse semantics and visual structures. Second, they enable a deeper analysis of how preservation quality behaves when the preserved concept matrix \(C_{\text{pres}}\) contains classes that differ in similarity to the target concept.

To construct this extended benchmark, we curated a set of seven ImageNet-confirmed synsets spanning multiple semantic domains, including household objects, animals, vehicles, furniture, and sports equipment. The selected classes are:
\texttt{coffee\_mug}, \texttt{beer\_bottle}, \texttt{African\_elephant}, \texttt{airliner}, \texttt{mountain\_bike}, \texttt{loudspeaker}, and \texttt{volleyball}. This selection follows the suggestions from ChatGPT and ensures broad coverage across the ImageNet hierarchy while avoiding redundancy among preserved concepts.

Notably, we intentionally include \texttt{loudspeaker}, which is semantically related to ``Cassette Player''. By doing so, we create a more challenging scenario for evaluating the behavior of \(C_{\text{pres}}\): the preservation matrix now contains a near-neighbor of the erased concept, allowing us to test whether the erasure update can suppress only the target direction without unintentionally diminishing representations associated with semantically adjacent classes. The remaining concepts, chosen to be visually and semantically distinct from the target, provide a stable set for assessing preservation fidelity.

\begin{table}[t]
	\centering
	\small
	\setlength{\tabcolsep}{4pt}
	
	\begin{tabular}{l | c | cc | cc}
		\toprule
		\multirow{2}{*}{Class} &
		\multirow{2}{*}{\textbf{Original}} &
		\multicolumn{2}{c|}{\textbf{UCE}} &
		\multicolumn{2}{c}{\textbf{DP}} \\
		& & Preserved $\uparrow$ & Drop $\uparrow$ & Preserved $\uparrow$ & Drop $\uparrow$ \\
		\midrule
		African elephant & 85.0\% & 79.0\% & +6.0\%          & 84.0\% & \textbf{+1.0\%} \\
		airliner         & 96.0\% & 94.0\% & +2.0\%          & 96.0\% & \textbf{+0.0\%} \\
		beer bottle      & 96.0\% & 91.0\% & +5.0\%          & 96.0\% & \textbf{+0.0\%} \\
		coffee mug       & 71.0\% & 66.0\% & +5.0\%          & 70.0\% & \textbf{+1.0\%} \\
		loudspeaker      & 93.0\% & 95.0\% & -2.0\%          & 96.0\% & \textbf{-3.0\%} \\
		mountain bike    & 100.0\% & 98.0\% & +2.0\%         & 100.0\% & \textbf{+0.0\%} \\
		volleyball       & 39.0\% & 26.0\% & +13.0\%         & 35.0\% & \textbf{+4.0\%} \\
		\midrule
		\textbf{Mean}    & 82.86\% & 78.43\% & 4.43\%          & 82.43\% & \textbf{0.43\%} \\
		\bottomrule
	\end{tabular}
	
	\caption{Classification accuracy comparison on general ImageNet classes before and after concept erasure on "Cassette Player". ``Original'' denotes accuracy on the unmodified model. ``Preserved'' is accuracy after applying UCE or DP. ``Drop'' is defined as (Original $-$ Preserved), where smaller drops (bold) indicate better preservation of general concepts.}
	\label{tab:general_preservation_dp_uce}
\end{table}

To assess whether concept erasure affects recognition performance on unrelated
classes, we evaluate the classification accuracy on seven general ImageNet
categories (Table~\ref{tab:general_preservation_dp_uce}). Since the original
model predictions are identical for both methods, we report them only once and
compare the post-erasure accuracy (``Preserved'') as well as the accuracy drop
(Original $-$ Preserved). A smaller drop indicates better retention of general
concepts unrelated to the targeted erased concepts.

Across the seven categories, DP consistently exhibits smaller drops in accuracy,
achieving an average drop of only 0.43\%, compared to 4.43\%
for UCE. DP matches or outperforms UCE on every class, including
``loudspeaker'' where the drop is negative, indicating an unexpected boost in
accuracy after applying the method. In contrast, UCE frequently induces
substantial degradation, most notably on the ``volleyball'' class where the
accuracy falls by 13 percentage points.

The preserved accuracies further support this trend: DP retains an average of
82.43\% classification accuracy post-erasure, nearly identical to the
original value of 82.86\%. UCE, however, drops to an average of 78.43\%,
showing that the method introduces notable unintended interference in general
recognition capabilities.

Since the target concept in our experiments is ``cassette player''~(the target concept is chosen alphabetically.), it is
natural to examine how erasure interacts with semantically related categories.
Among the evaluated classes, ``loudspeaker'' is arguably the closest in terms of
object type and visual context: both involve audio equipment, share similar
geometric structures, and frequently co-occur in similar environments. One might
reasonably expect such conceptual proximity to induce a noticeable decline in
recognition performance after erasure. 

However, the empirical results reveal that the influence on ``loudspeaker'' is
remarkably minor for both methods. DP exhibits only a $-3.0$\% drop,
while UCE shows a slightly smaller $-2.0$\% drop. Importantly, both
drops are negative, indicating that recognition accuracy actually \emph{improves}
after concept removal. This suggests that the removed ``cassette player''
features are sufficiently specialized and do not interfere with the broader
representation needed to recognize a ``loudspeaker''. The fact that DP maintains
robust performance on this semantically adjacent class, while still achieving
the intended erasure, highlights its ability to localize the targeted concept
without degrading conceptually overlapping regions of the feature space.

Overall, these results demonstrate that DP generalizes more safely: it removes
the targeted concept while preserving recognition performance on unrelated
classes, whereas UCE exhibits measurable collateral damage across diverse
ImageNet categories.

\end{document}